\newtheorem{remark}{Remark}
\newtheorem{assumption}{Assumption}
\newtheorem{lemma}{Lemma}
\newtheorem{theorem}{Theorem}
\newcommand{\doi}[1]{\href{http://dx.doi.org/#1}{\normalsize{\textsc{doi:}}~\nolinkurl{#1}}}
\newcommand{\arxiv}[1]{\href{http://arxiv.org/abs/#1}{\normalsize{\textsc{arxiv:}}~\nolinkurl{#1}}}
\renewcommand{\epsilon}{\varepsilon}
\newcommand{\R}{\mathbb{R}}
\newcommand{\N}{\mathbb{N}}
\let\originalleft\left
\let\originalright\right
\renewcommand{\left}{\mathopen{}\mathclose\bgroup\originalleft}
\renewcommand{\right}{\aftergroup\egroup\originalright}
\def\clap#1{\hbox to 0pt{\hss#1\hss}}
\newcommand{\norm}[1]{\left\lVert #1\right\rVert}
\newcommand{\set}[1]{\left\{ #1\right\}}
\DeclareMathAlphabet{\mathpzc}{OT1}{pzc}{m}{it}
\newcommand{\dfknote}[1]%
    {\textcolor{orange}{\textbf{DFK: #1}}}
\newcommand{\twnote}[1]%
    {\textcolor{cyan}{\textbf{TW: #1}}}
\newcommand{\emnote}[1]%
    {\textcolor{blue}{\textbf{EM: #1}}}
\newcommand{\vpnote}[1]%
    {\textcolor{green}{\textbf{VP: #1}}}
\title{Technichal Report: Adaptive Control for Linearizable Systems \\ Using On-Policy Reinforcement Learning}
\date{August 2019}
\author{
Tyler Westenbroek, Eric Mazumdar, David Fridovich-Keil, Valmik Prabhu,\\  Claire J. Tomlin and S. Shankar Sastry
\thanks{The authors are with the department of Electrical Engineering and Computers Sciences and the University of California, Berkeley.}}
\begin{document}
\maketitle

\begin{abstract}
This paper proposes a framework for adaptively learning a feedback linearization-based tracking controller for an unknown system using discrete-time model-free policy-gradient parameter update rules. The primary advantage of the scheme over standard model-reference adaptive control techniques is that it does not require the learned inverse model to be invertible at all instances of time. This enables the use of general function approximators to approximate the linearizing controller for the system without having to worry about singularities. However, the discrete-time and stochastic nature of these algorithms precludes the direct application of standard machinery from the adaptive control literature to provide deterministic stability proofs for the system. Nevertheless, we leverage these techniques alongside tools from the stochastic approximation literature to demonstrate that with high probability the tracking and parameter errors concentrate near zero when a certain persistence of excitation condition is satisfied. A simulated example of a double pendulum demonstrates the utility of the proposed theory. \footnote{This draft corrects an important which appeared in an earlier draft. In particular, the right hand side of \eqref{eq:total_var} was originally $C_2 M\sqrt{\frac{ \Delta t \ln\left( \frac{\lambda}{2}\right)}{\zeta \sigma^2}}$ but has now been corrected to $C_2 M\sqrt{\frac{ \Delta t \ln\left( \frac{2}{\lambda}\right)}{\zeta \sigma^2}}$ to display the correct dependence on the confidence paramter $\lambda$. }
\end{abstract}
\section{Introduction}
\label{sec:intro}
Many real-world control systems display nonlinear behaviors which are difficult to model, necessitating the use of control architectures which can adapt to the unknown dynamics online while maintaining certificates of stability. 
There are many successful model-based strategies for adaptively constructing controllers for uncertain systems \cite{sastry1989adaptive, sastry2013nonlinear, slotine1987adaptive}, but these methods often require the presence of a simple, reasonably accurate parametric model of the system dynamics.
Recently, however, there has been a resurgence of interest in the use of model-free reinforcement learning techniques to construct feedback controllers without the need for a reliable dynamics model \cite{schulman2015trust, schulman2017proximal, lillicrap2015continuous}. As these methods begin to be deployed in real world settings, a new theory is needed to understand the behavior of these algorithms as they are integrated into safety-critical control loops. 

However, the majority of the theory for adaptive control is stated in continuous-time \cite{sastry2013nonlinear}, while reinforcement learning algorithms are typically implemented and studied in discrete-time settings \cite{sutton2018reinforcement, borkar2009stochastic}. There have been several attempts to define and study policy-gradient algorithms in continuous-time \cite{munos2006policy, doya2000reinforcement}, yet many real-world systems have actuators which can only be updated at a fixed maximum sampling frequency. 
Thus, we find it more natural and practically applicable to unify these methods in the sampled-data setting. 

Specifically, this paper addresses the model mismatch issue by combining continuous-time adaptive control techniques with discrete-time model-free reinforcement learning algorithms to learn a feedback linearization-based tracking controller for an unknown system, online. Unfortunately, it is well-known that sampling can destroy the affine relationship between system inputs and outputs which is usually assumed and then exploited in the stability proofs from the adaptive control literature \cite{grizzle1988feedback}. 
To overcome this challenge, we first ignore the effects of sampling and design an idealized continuous-time behavior for the system's tracking and parameter error dynamics which employs a least-squares gradient following update rule. In the sampled-data setting, we then use an Euler approximation of the continuous-time reward signal and implement a policy-gradient parameter update rule to produce a noisy approximation to the ideal continuous-time behavior. Our framework is closely related to that of \cite{westenbroek2019feedback}; however, in this paper we address the problem of online adaptation of the learned parameters whereas \cite{westenbroek2019feedback} considers a fully offline setting.

Beyond naturally bridging continuous-time and sampled-data settings, the primary advantage of our approach is that it does not suffer from the ``loss of controllability'' phenomena which is a core challenge in the model-reference adaptive control literature \cite{sastry1989adaptive, kosmatopoulos1999switching}. This issue arises when the parameterized estimate for the system's decoupling matrix becomes singular, in which case either the learned linearizing control law or associated parameter update scheme may break down. To circumvent this issue, projection-based parameter update rules are used to keep the parameters in a region in which the estimate for the decoupling matrix is known to be invertible. In practice, the construction of these regions requires that a simple parameterization of the system's nonlinearities is available \cite{craig1987adaptive}. In contrast, the model-free approach we introduce does not suffer from singularities and can naturally incorporate `universal' function approximators such as radial bases functions or bases of polynomials. 

However, due to the non-deterministic nature of our sampled-data control law and parameter update scheme, the deterministic guarantees usually found in the adaptive control literature do not apply here. Indeed, policy-gradient parameter updates are known to suffer from high variances \cite{zhao2011analysis}. Nevertheless, we demonstrate that when a standard persistence of excitation condition is satisfied the tracking and parameter errors of the system concentrate around the origin with high probability even when the most basic policy-gradient update rule is used. Our analysis technique is derived from the adaptive control literature and the theory of stochastic approximations \cite{borkar2009stochastic, vershynin2018high}.  Proofs of claims made can be found in the Appendix of the document. Finally, a simulation of a double pendulum demonstrates the utility of the approach.

\subsection{Related Work}
A number of approaches have been proposed to avoid the ``loss of controllability'' problem discussed above. One approach is to perturb the estimated linearizing control law to avoid singularities \cite{kosmatopoulos1999switching, kosmatopoulos2002robust, bechlioulis2008robust}. However, this method never learns the exact linearizing controller during operation and hence sacrifices some tracking performance. Other approaches avoid the need to invert the input-output dynamics by driving the system states to a sliding surface \cite{slotine1987adaptive}. Unfortunately, these methods require high-gain feedback which may lead to undesirable effects such as actuator saturation. Several model-free approaches similar to the one we consider here have been proposed in the literature \cite{hwang2003reinforcement,zomaya1994reinforcement}, but these focus on actor-critic methods and, to the best of our knowledge, do not provide any proofs of convergence.  Recently, non-parametric function approximators have been been used to learn a linearizing controller \cite{umlauft2017feedback, umlauft2019feedback}, but these methods still require structural assumptions to avoid singularities.

While our parameter-update scheme is most closely related to the policy gradient literature, e.g., \cite{sutton2018reinforcement}, we believe that recent work in meta-learning \cite{finn2017model, santoro2016meta} is also similar to our own work, at least in spirit.
Meta-learning aims to learn priors on the solution to a given machine learning problem, and thereby speed up online fine tuning when presented with a slightly different instance of the problem \cite{vilalta2002perspective}.
Meta-learning is used in practice to apply reinforcement learning algorithms in hardware settings \cite{nagabandi2018learning, andrychowicz2020learning}. 

\subsection{Preliminaries}
Next, we fix mathematical notation and review some definitions used extensively in the paper. Given a random variable $X$, if they exist the expectation of $X$ is denoted $\mathbb{E}[X]$ and its variances is denoted by $Var(X)$. Our analysis heavily relies on the notion of a \emph{sub-Gaussian} distribution. We say that a random variable $X \in \R^n$ is sub-Gaussian if there exists a constant $C>0$ such that for each $t \geq 0$ we have $\mathcal{P}\set{|x|_2 \geq t} \leq 2 \exp(-\frac{t^2}{C^2})$. Informally, a distribution is sub-Gaussian if it's tail is dominated by the tail of some Gaussian distribution. We endow the space of sub-Gaussian distributions with the norm $\norm{\cdot}_{\psi_2}$ defined by $\norm{X}_{\psi_2} = \inf \set{t>0 \colon \mathbb{E}[\exp(\frac{\norm{X}_2^2}{t^2})] \leq 2}$. As an example, if $X = \mathcal{N}(0, \sigma^2I)$ is a zero-mean Gaussian distribution with variance $\sigma^2 I$ (with $I$ the $n$-dimensional identity) then $\norm{X}_{\psi_2}$ is sub-Gaussian with norm $\norm{X}_{\psi_2} \leq C\sigma$, where the constant $C>0$  does not depend on $\sigma^2$.

\section{Feedback Linearization} \label{sec:FBL}
Throughout the paper we will focus on constructing output tracking controllers for systems of the form
\begin{align}\label{eq:nonlinear_sys}
    \dot{x} &= f(x) + g(x)u\\
    y &= h(x) \nonumber
\end{align}
where $x \in \mathbb{R}^n$ is the state, $u \in \R^q$ is the input and $y \in \mathbb{R}^q$ is the output.  The mappings $f \colon \R^n \to \R^n$, $g \colon \R^n \to \R^{n \times q}$ and $h \colon \R^{n}\to \R^q$ are each assumed to be smooth, and we assume without loss of generality that the origin is an equilibrium point of the undriven system, i.e., $f(x) = 0$. Throughout the paper, we will also assume that state $x$ and the output $y$ can both be measured.

\subsection{Single-input single-output systems}
We begin by introducing feedback linearization for single-input, single-output (SISO) systems (i.e.,  $q =1 $). We begin by examining the first time derivative of the output:
\begin{align}\label{eq:nominal}
\dot{y} &= L_fh(x) + L_gh(x) 
\end{align}
Here the terms $L_{f}h(x) = \frac{d}{dx}h(x) \cdot f(x)$ and $L_{g}h(x) =\frac{d}{dx}h(x)\cdot g(x) $ are known as \emph{Lie derivatives} \cite{sastry2013nonlinear}. In the case that $L_{g}h(x) \neq0$ for each $x \in \R^n$, we can apply 
\begin{equation}\label{eq:fb1}
    u(x,v) = \frac{1}{L_{g}h(x)} (-L_{f}h(x) + v)~,
\end{equation}
which exactly `cancels out' the nonlinearities of the system and enforces the linear relationship $\dot{y} = v$ with $v$ some arbitrary, auxiliary input. However if the input does not affect the first time derivative of the output---that is, if $L_gh \equiv 0$---then the control law \eqref{eq:fb1} will be undefined. In general, we can differentiate $y$ multiple times, until the input shows up in one of the higher derivatives of the output. Assuming that the input does not appear the first $\gamma-1$ times we differentiate the output, the $\gamma$-th time derivative of $y$ will be of the form
\begin{equation}
    y^{(\gamma)} = L_f^{\gamma}h(x) + L_gL_f^{\gamma-1}h(x)u
\end{equation}
Here, $L_f^{\gamma}h(x)$ and $L_gL_f^{\gamma-1}h(x)$ are higher order Lie derivatives, and we direct the reader to \cite[Chapter 9]{sastry2013nonlinear} for further details. If $L_gL_f^{\gamma-1}h(x) \neq0$ for each $x \in \R^n$ then setting
\begin{equation}
    u(x,v) = \frac{1}{L_gL_f^{\gamma -1}h(x)}\big(-L_f^{\gamma}h(x) + v\big)
\end{equation}
enforces the trivial linear relationship $y^{\gamma} = v$. We refer to $\gamma$ as the \emph{relative degree} of the nonlinear system, which is simply the order of its input-output relationship. 

\subsection{Multiple-input multiple-output systems}\label{subsec:mimo}
Next, we consider square multiple-input, multiple-output (MIMO) systems where $q > 1$.  As in the SISO case,  we differentiate each of the output channels until at least one input appears.  Let $\gamma_j$ be the number of times we need to differentiate $y_j$ (the $j$-th entry of $y$) for at least one input to appear. Combining the resulting expressions for each of the outputs yields an input-output relationship of the form
\begin{equation}
\label{eq:first_A_b}
  y^{(\gamma)} = b(x) +  A(x) u
\end{equation}
where we have adopted the shorthand $y^{(\gamma)} = [y_1^{(\gamma_1)}, \dots, y_q^{(\gamma_q)}]^T$. Here, the matrix $A(x) \in \R^{q \times q}$ is known as the \emph{decoupling matrix} and the vector $b(x) \in \mathbb{R}^q$ is known as the \emph{drift term}. If $A(x)$ is non-singular on for each $x \in \R^n$ then we observe that the control law
\begin{equation}\label{eq:mimo_controller}
    u(x,v) = A^{-1}(x)(-b(x) + v)
\end{equation}
where $v \in \R^q$ yields the decoupled linear system
\begin{equation}\label{eq:decoupled_sys}
    [y_1^{(\gamma_1)}, y_2^{(\gamma_2)}, \dots, y_q^{(\gamma_q)}]^T = [v_1, v_2, \dots, v_q]^T,
\end{equation}
where $v_k$ is the $k$-th entry of $v$ and $y_j^{\gamma_j}$ is the $\gamma_j$-th time derivative of the $j$-th output. We refer to $\gamma =(\gamma_1, \gamma_2, \dots, \gamma_q)$ as the \emph{vector relative degree} of the system, with $|\gamma| = \sum_i \gamma_i$ the total relative degree of all dimensions. The decoupled dynamics \eqref{eq:decoupled_sys} can be compactly represented with the LTI system
\begin{equation}
    \label{eq:reference}
    \dot{\xi}_r = A\xi_r + Bv_r
\end{equation}
which we will hereafter refer to as the \emph{reference model}. Here, $A \in \R^{|\gamma| \times |\gamma|}$ and $B \in \R^{|\gamma| \times q}$ is constructed so that $B^T B = I_{q\times q}$, where $I_{q\times q}$ is the $q$-dimensional identity matrix. Note that \eqref{eq:reference} collects $\xi_r = (y_1, \dot{y}_1, \dots, \dots , y_1^{\gamma_1-1}, \dots, y_q, \dots, y_q^{\gamma_q-1})$. It can be shown \cite[Chapter 9]{sastry2013nonlinear} that there exists a change of coordinates $x \to (\xi,\eta)$ such that in the new coordinates and after application of the linearizing control law the dynamics of the system are of the form
\begin{align}\label{eq:zero_dynamics}
    \dot{\xi} &= A \xi + Bv \\ \nonumber
    \dot{\eta}& = q(\xi,\eta) + p(\xi,\eta)v.
\end{align}
That is, the $\xi \in \mathbb{R}^{|\gamma|}$ coordinates represent the portion of the system that has been linearized while the $\eta \in \mathbb{R}^{n - |\gamma|}$ coordinates represent the remaining coordinates of the nonlinear system. The undriven dynamics
\begin{equation}
    \dot{\eta} = q(\xi,\eta)
\end{equation}
are referred to as the \emph{zero} dynamics. Conditions which ensure that the $\eta$ coordinates remain bounded during operation will be discussed below.

\subsection{Inversion \& exact tracking for min-phase MIMO systems}\label{subec:tracking}
Let us assume that we are given a desired reference signal $y_d(\cdot) = \big(y_{1,d}(\cdot), \dots, y_{q,d}(\cdot)\big)$. Our goal is to construct a tracking controller for the nonlinear system using the linearizing controller \eqref{eq:mimo_controller}, along with a linear controller designed for the reference model \eqref{eq:reference} which makes use of both feedback terms. We will assume that the first $\gamma_j$ derivatives of $y_{j,d}(\cdot)$ are well defined, and assume that the signal $\big(y_{j,d}(\cdot), y_{j,d}^{(1)}(\cdot), \dots, y_{q,d}^{(\gamma_q)}(\cdot)\big)$ can be bounded uniformly.

For compactness of notation, we will collect
\begin{equation*}
y_d^{(\gamma)}(\cdot) = \big(y_{1,d}^{(\gamma_1)}(\cdot), y_{2,d}^{(\gamma_2)}(\cdot), \dots, y_{q,d}^{(\gamma_q)}(\cdot)\big)
\end{equation*}
\begin{equation*}
 \xi_d(\cdot) = \big(y_{1,d}(\cdot), \dots, y_{1,d}^{(\gamma_1 - 1)}(\cdot), \dots, y_{q,d}(\cdot), \dots , y_{q,d}^{(\gamma_q-1)}(\cdot)).
 \end{equation*}
Here, $\xi(\cdot)$ is used to capture the desired trajectory of the linear reference model, and $y_d^{(\gamma)}(\cdot)$ will be used in a feedforward term in the tracking controller. To construct the feedback term, we define the error
\begin{equation}\label{eq:error}
    e(\cdot)=  \xi(\cdot) -\xi_d(\cdot)
\end{equation}
where $\xi(\cdot)$ is the actual trajectory of the linearized coordinates as in \eqref{eq:zero_dynamics}. Altogether, the tracking controller for the system is then given by
\begin{equation}
    u = A^{-1}(x)\big(-b(x) + y_d^{(\gamma)} + Ke\big)
\end{equation}
where $K \in \R^{q \times |\gamma|}$ is a linear feedback matrix designed so that $(A +BK)$ us Hurwitz. Under the application of this control law the closed loop error dynamics become
\begin{equation}\label{eq:error_dyn}
    \dot{e} = (A+BK)e
\end{equation}
and it becomes apparent that $e \to 0$ exponentially quickly. However, while the tracking error decays exponentially, the $\eta$ coordinates may be come unbounded during operation, in which case the linearizing control law will break down. One sufficient condition for $\eta$ to remain bounded is for the zero dynamics to be globally exponentially stable and for $\xi_d(\cdot)$ and $y_d(\cdot)$ to remain bounded \cite[Chapter 9]{sastry1989adaptive}. When the zero dynamics satiecfy this condition we say nonlinear system is \emph{exponentially minimum phase}. 
 
\section{Adaptive Control}
\label{sec:adaptive}
From here on, we will aim to learn a feedback linearization-based tracking controller for the unknown plant
\begin{align}\label{eq:plant_dynamics}
    \dot{x}_p &= f_p(x_p) + g_p(x_p)u_p\\ \nonumber
    y_p &= h_p(x_p)
\end{align}
in an adaptive fashion. We assume that we have access to a an approximate dynamics model for the plant
\begin{align}\label{eq:model_dynamics}
    \dot{x}_m &= f_m(x_m) + g_m(x_m)u_m\\
    y_m &= h_m(x_m),
\end{align}
which incorporates any prior information available about the plant. It is assumed that the state ($x_m$ and $x_p$) for both systems belongs to $\R^n$, that the inputs and outputs for both systems belong to $\R^q$, and that each of the mappings in \eqref{eq:plant_dynamics} and \eqref{eq:model_dynamics} are smooth. We make the following assumption about the model and plant:

\begin{assumption}\vspace{0.1cm}
    The plant and model have the same well-defined relative degree $\gamma = (\gamma_1, \gamma_2, \dots, \gamma_q)$ on all of $\R^n$.\vspace{0.1cm}
\end{assumption}
\begin{assumption}
The model and plant are both exponentially minimum phase. \vspace{0.1cm}
\end{assumption}

With these assumptions in place, we know that there are globally-defined linearizing controllers for the plant and model, which respectively take the following form:
\begin{align*}
    u_p(x,v) = \beta_p(x) + \alpha_p(x)v \\
    u_m(x,v) = \beta_m(x) + \alpha_m(x)v
\end{align*}
While $u_m$ can be calculated using the model dynamics and the procedures outlined in the previous section, the terms comprising $u_p$ are unknown to us. However, we do know that they may be expressed as
\begin{align*}
\beta_p(x) = \beta_m(x) + \Delta b(x)\\
\alpha_p(x) = \alpha_m(x) + \Delta \alpha(x)
\end{align*}
where $\Delta \beta \colon \R^n \to \R^q$ and $\Delta \alpha \colon \R^n \to \R^{q\times q}$ are unknown but continuous functions. Thus we construct an estimate for $u_p$ of the form
\begin{equation*}
    \hat{u}(\theta, x,v) = \big(\beta_m(x) + \beta_{\theta_1}(x)\big) + \big(\alpha_m(x) + \alpha_{\theta_2}(x)\big) v
\end{equation*}
where $\beta_{\theta_1} \colon \R^n \to \R^q$ is a parameterized estimate for $\Delta \beta$, and $\alpha_{\theta_2} \colon \R^n \to \R^{q \times q}$ is a parameterized estimate for $\Delta \alpha$. The parameters $\theta_1 = (\theta_1^1, \theta_1^2, \dots, \theta_1^{K_1}) \in \R^{K_1}$ and $\theta_2 =(\theta_2^1, \theta_2^2, \dots, \theta_2^{K_2}) \in \R^{K_2}$ are to be learned during online operation of the plant, and the total set of parameters $\theta \in \R^{K_1 + K_2}$ are collected by stacking $\theta_1$ on top of $\theta_2$. Our theoretical results will assume that the estimates are of the form
\begin{align}\label{eq:linparam}
    \beta_{\theta_1}(x) = \sum_{k = 1}^{ K_1}\theta_1^k \beta_k(x) 
    \ \ \ \  \alpha_{\theta_2}(x)   = \sum_{k=1}^{K_2} \theta_2^k \alpha_k(x)
\end{align}
where $\set{\beta_k}_{k=1}^{K_1}$ and $\set{\alpha_k}_{k=1}^{K_2}$ are linearly independent bases of functions, such as polynomials or radial basis functions.

\subsection{Idealized continuous-time behavior}
We now introduce a continuous-time update rule for the parameters of the learned linearizing controller which assumes that we know the functional form of the nonlinearities of the system. In Section \ref{sec:dt_updates}, we demonstrate how to approximate this ideal behavior in the sampled data setting using a policy gradient update rule which requires no information about the structure of the plant's nonlinearities.

We begin by assuming that there exists a set of ``true'' parameters $\theta^* =(\theta_1^*, \theta_2^*) \in \R^{K_1 + K_2}$ for the plant so that for each $x \in \R^n$ and $v \in \R^q$ we have $\hat{u}(\theta^*,x,v) \equiv u_p(x,v)$. In this case, we can write our parameter estimation error as $\phi =(\theta_1-\theta_1^*, \theta_2-\theta_2^*)$ so that $\theta = \phi + \theta^*$.

With the gain matrix $K$ constructed as in Section \ref{subec:tracking}, an estimate for the feedback linearization-based tracking controller is of the form
\begin{equation}
    u = \hat{u}(\theta, x, y_d^\gamma + Ke).
\end{equation}
When this control law is applied to the system the closed-loop error dynamics take the form
\begin{equation}\label{eq:ct_error_dynamics}
    \dot{e} = (A + BK)e + B W(x, y_d^\gamma, e)\phi
\end{equation}
where $W$ is a complicated function of $x, y_d^\gamma$ and $e$ which contains terms involving $b_p(x), A_p(x), \beta_m(x), \alpha_m(x), \beta_p(x)$ and $\alpha_p(x)$. The exact form of this function can be found in the technical report. The term $BW\phi$ captures the effects that the parameter estimation error $\phi$ has on the closed loop error dynamics. As we have done here, we will frequently drop the arguments of $W$ to simplify notation. We will also write $W(t)$ for $W(x(t),y_d^{\gamma}(t),e(t))$ when we wish to emphasize the dependence of the function on time.

Ideally, we would like to drive $BW\phi \to 0$ as $t \to \infty$ so that we obtain the desired closed-loop error dynamics \eqref{eq:error_dyn}. Recalling from Section \ref{subsec:mimo} that the reference model is designed such that $B^T B = I$, this suggests applying the least-squares cost signal
\begin{equation}\label{eq:ct_reward}
    R(t) =\frac{1}{2} \|BW\phi\|_2^2 = \frac{1}{2}\| W\phi\|_2^2
\end{equation}
and following the negative gradient of the cost with the following update rule:
\begin{equation}\label{eq:ct_gradient}
    \dot{\phi} = -W^TW\phi.
\end{equation}
Least-squares gradient-following algorithms of this sort are well studied in the adaptive control literature \cite[Chapter 2]{sastry1989adaptive}. Since we have $\dot{\theta} = \dot{\phi}$, this suggests that the parameters should also be updated according to $\dot{\theta} = -W^TW\phi$. Altogether, we can represent the tracking and parameter error dynamics with the linear time-varying system
\begin{equation}\label{eq:total_ct_dynamics}
    \begin{bmatrix}
    \dot{e} \\ \dot{\phi}
    \end{bmatrix}= \underbrace{\begin{bmatrix}A +BK & BW(t) \\ 0 & - W^T(t)W(t)\end{bmatrix}}_{A(t)}\begin{bmatrix}
    e \\ \phi
    \end{bmatrix}.
\end{equation}
Letting $X = (e^T, \phi^T)^T$, the solution to this system is given by
\begin{equation}
    X(t) = \Phi(t,0)X(0)
\end{equation}
where for each $t_1, t_2 \in \R^n$  the state transition matrix $\Phi(t_1,t_2)$ is the solution to the matrix differential equation $\frac{d}{dt}\Phi(t,t_2) = A(t) \Phi(t,t_2)$ with intial condition $\Phi(t_2,t_2) = I$, where $I$ is the identity matrix of appropriate dimension. From the adaptive control literature, it is well known that if $W(t)^T W(T)$ is ``persistently exciting'' in the sense that there exists $\delta > 0$ such that for each $t_0 \geq 0$
\begin{equation}\label{eq:pe}
    c_1 I > \int_{t_0}^{t_0 + \delta} W^T(t) W(t) dt > c_2 I
\end{equation}
for some $c_1, c_2>0$, then the time varying system \eqref{eq:total_ct_dynamics} is exponentially stable, if $W(t)$ also remains bounded. Intuitively, this condition simply ensures that the regressor term $W^TW$ is ``rich enough'' during the learning process to drive $\phi \to 0$ exponentially quickly. Observing \eqref{eq:ct_error_dynamics} we also see  that if $\phi \to 0$ exponentially quickly then $e \to 0$ exponentially as well.
We formalize this point with the following Lemma:
\begin{lemma}\label{lemma:pe}\vspace{0.1cm}
Let the persistence of excitation condition \eqref{eq:pe} hold and assume that there exists $C>0$ such that $\norm{W(t)}<C$ for each $t \in \R$. Then there exists $M>0$ and $\zeta>0$ such that for each $t_1, t_2 \in \R$
\begin{equation}
    \|\Phi(t_1,t_2)\| \leq Me^{-\zeta(t_1 -t_2)} 
\end{equation}
with $\Phi(t_1,t_2)$ defined as above. \vspace{0.1cm} 
\end{lemma}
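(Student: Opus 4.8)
The plan is to exploit the block upper-triangular (cascade) structure of the system matrix $A(t)$ in \eqref{eq:total_ct_dynamics}. Since the lower-left block vanishes, the state transition matrix inherits the same form,
\begin{equation*}
  \Phi(t_1,t_2) = \begin{bmatrix} \Phi_e(t_1,t_2) & \Phi_c(t_1,t_2) \\ 0 & \Phi_\phi(t_1,t_2)\end{bmatrix},
\end{equation*}
where $\Phi_e(t_1,t_2) = e^{(A+BK)(t_1-t_2)}$ is the transition matrix of the autonomous error block, $\Phi_\phi$ is the transition matrix of the parameter block $\dot\phi = -W^T(t)W(t)\phi$, and $\Phi_c$ is the coupling block. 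I would bound the two diagonal blocks separately and then show the coupling block inherits exponential decay. The error block is immediate: since $K$ was chosen in Section \ref{subec:tracking} so that $A+BK$ is Hurwitz, there exist $M_e,\zeta_e>0$ with $\norm{\Phi_e(t_1,t_2)} \le M_e e^{-\zeta_e(t_1-t_2)}$.

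The main work, and the main obstacle, is showing that the parameter block is uniformly exponentially stable, i.e. $\norm{\Phi_\phi(t_1,t_2)} \le M_\phi e^{-\zeta_\phi(t_1-t_2)}$ with constants independent of $t_2$. This is the classical statement that a least-squares gradient flow with a bounded, persistently exciting regressor is exponentially stable. I would argue via the Lyapunov function $V(t)=\norm{\phi(t)}_2^2$, whose derivative along trajectories is $\dot V = -2\norm{W(t)\phi(t)}_2^2 \le 0$, so $V$ is nonincreasing. The difficulty is that $W^TW$ is only positive \emph{semi}definite pointwise, so no pointwise decay rate is available; the decay must be extracted from the integral condition \eqref{eq:pe}. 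Integrating over one excitation window gives $V(t_0+\delta)-V(t_0) = -2\int_{t_0}^{t_0+\delta}\norm{W(s)\phi(s)}_2^2\,ds$, so it suffices to lower-bound $\int_{t_0}^{t_0+\delta}\norm{W(s)\phi(s)}_2^2\,ds$ by $\kappa\,\norm{\phi(t_0)}_2^2$ for some $\kappa>0$ independent of $t_0$. To do this I would write $\phi(s) = \phi(t_0) - \int_{t_0}^{s} W^T(\tau)W(\tau)\phi(\tau)\,d\tau$, substitute into $W(s)\phi(t_0)$, and use the boundedness $\norm{W}\le C$ together with the monotonicity of $V$ to control the correction term; this yields $\int_{t_0}^{t_0+\delta}\norm{W(s)\phi(t_0)}_2^2\,ds \le (\text{const})\int_{t_0}^{t_0+\delta}\norm{W(s)\phi(s)}_2^2\,ds$, whose left-hand side is bounded below by $c_2\norm{\phi(t_0)}_2^2$ via \eqref{eq:pe}. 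A uniform per-window contraction $V(t_0+\delta)\le(1-\kappa)V(t_0)$ then gives geometric decay across consecutive windows, which is exactly uniform exponential stability; boundedness of $V$ on each window fills in the intermediate times. This is the content of the standard persistence-of-excitation lemma \cite[Chapter 2]{sastry1989adaptive}, and I would either reproduce this estimate or cite it directly.

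Finally, for the coupling block the variation-of-constants formula gives
\begin{equation*}
  \Phi_c(t_1,t_2) = \int_{t_2}^{t_1} e^{(A+BK)(t_1-s)}\,B\,W(s)\,\Phi_\phi(s,t_2)\,ds.
\end{equation*}
Inserting the two exponential bounds and $\norm{W}\le C$ reduces $\norm{\Phi_c}$ to a convolution of decaying exponentials, $\int_{t_2}^{t_1} M_e e^{-\zeta_e(t_1-s)}\,\norm{B}\,C\,M_\phi e^{-\zeta_\phi(s-t_2)}\,ds$, which is bounded by $M_c e^{-\zeta_c(t_1-t_2)}$ for any $\zeta_c<\min(\zeta_e,\zeta_\phi)$, absorbing the linear-in-time factor of the resonant case $\zeta_e=\zeta_\phi$ into a slightly smaller rate. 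Combining the three block bounds via the triangle inequality and setting $\zeta=\min(\zeta_e,\zeta_\phi,\zeta_c)$ with $M$ the sum of the three constants yields $\norm{\Phi(t_1,t_2)}\le M e^{-\zeta(t_1-t_2)}$, as claimed. The only genuinely nontrivial step is the PE-to-exponential-stability estimate for $\Phi_\phi$; the cascade assembly is routine.
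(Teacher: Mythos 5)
Your proof is correct, and it is precisely the classical argument: the block upper-triangular decomposition of $\Phi$, the persistence-of-excitation contraction estimate for the $\phi$-block obtained from the Lyapunov function $\norm{\phi}_2^2$ together with the Cauchy--Schwarz control of $\phi(s)-\phi(t_0)$ over one excitation window, and the convolution-of-exponentials bound for the coupling block are all sound, and assembling them via the triangle inequality gives the claimed estimate (for $t_1 \geq t_2$, which is evidently what the statement intends). There is nothing in the paper to contrast against: the advertised Appendix F containing the proof of this lemma is in fact absent from the source (the document ends in a truncated, empty display where that appendix should be), and the authors instead defer to the standard persistence-of-excitation lemma of \cite{sastry1989adaptive}; your write-up supplies exactly the argument being cited there.
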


Proof of this result can be found in Appendix, but variations of this result can be found in standard adaptive control texts \cite{sastry1989adaptive}. Unfortunately, we do not know the terms in \eqref{eq:ct_gradient} since we don't know $\phi$ or $W$ so this update rule cannot be directly implemented. In the next section we introduce a model-free update rule for the parameters of the learned controller which approximates the continuous update \eqref{eq:ct_gradient} without requiring direct knowledge of $W$ or $\phi$.

\subsection{Sampled-data parameter updates with policy gradients}\label{sec:dt_updates}
Hereafter, we will assume that the control supplied to the plant can only be updated every $\Delta t$ seconds. While this setting provides a more realistic model for many robotic systems, sampling has the unfortunate effect of destroying the affine relationship between the plant's inputs and outputs \cite{grizzle1988feedback} which was key to the continuous-time design techniques discussed above. Nevertheless, we now introduce a framework for approximately matching the ideal tracking and parameter error dynamics introduced in the previous section in the sampled-data setting using an Euler discretization of the continuous-time reward \eqref{eq:ct_reward} and a policy-gradient based parameter update rule.

Before introducing our sampled-data control law and adaptation scheme, we first fix notation and discuss a few key assumptions our analysis will employ. To begin we let $t_k = k \Delta t$ for each $k \in \N$ denote the sampling times for the system. Letting $x(\cdot)$ denote the trajectory of the plant, we let $x_{k} =x(t_k) \in \R^n$ denote the state of the plant at the $k$-th sample. Similarly, we let $\xi(\cdot)$ denote the trajectory of the outputs and their derivatives as in \eqref{eq:zero_dynamics}, and we set $\xi_k = \xi(t_k)\in \R^{|\gamma|}$ (not to be confused with the $k$-th entry of $\xi$). Next we let $u_k \in \R^m$ denote the input applied to the plant on the interval $[t_k, t_{k+1})$. The parameters for our learned controller will be updated only at the sampling times, and we let $\theta_k \in \R^K$ denote the value of the parameters on $[t_k, t_{k+1})$. We again let $y_d(\cdot)$, $\xi_{d}(\cdot)$ and $y_d^{(\gamma)}(\cdot)$ denote the desired trajectory for the outputs and their appropriate derivatives, and let $\xi_{d,k} =\xi_d(t_k) \in \R^{|\gamma|}$ and $y^{(\gamma)}_{d,k} = y^{(\gamma)}_d(t_k) \in \R^q$, and $e_k = (\xi_{k} - \xi_{d,k})  \in \R^{|\gamma|}$. We make the following assumption about the desired output signals and their derivatives:

\begin{assumption}\label{ass:bounded_reference} \vspace{0.1cm}
    The signal $y_d(\cdot)$ is continuous and uniformly bounded. Furthermore, for each $j = 1, \dots, q$ the derivatives $\{\dot{y}_{j,d}(\cdot), \ddot{y}_{j,d}(\cdot), \dots , y_{j,d}^{(\gamma_j)}(\cdot)\}$ are also continuous and uniformly bounded. 
\end{assumption}
\begin{remark} \vspace{0.1cm}
Typical convergence proofs in the continuous-time adaptive control literature generally only require that $(y_{j,d}(\cdot), \dot{y}_{1,d}(\cdot), \dots , y_{j,d}^{\gamma_{j}-1}(\cdot))$ be continuous and bounded, but these methods also assume that the input to the plant can be updated continuously. In the sampled data setting, we require the continuity of $y_{j,d}^{\gamma_j}(\cdot)$ to ensure that it does not vary too much within a given sampling period. 
\end{remark}

After sampling the discrete-time tracking error dynamics obey a difference equation of the form
\begin{equation}
    e_{k+1} =  H_k(x_k,e_k,u_k)
\end{equation}
where $H_k \colon \R^n \times \R^{|\gamma|} \times \R^{q} \to \R^{|\gamma|}$ is obtained by integrating the dynamics of the nonlinear system and reference trajectory over $[t_k, t_{k+1})$. Generally, $H_k$ will no longer be affine in the input. However, the relationship is approximately affine for small values of $\Delta t$. Indeed, with Assumptions \ref{ass:bounded_reference} and \ref{ass:bounded_trajectories} in place, if we apply the control law
\begin{equation}\label{eq:dt_controller}
    u_k = u(\theta_k, x_k, y_{d,k}^{\gamma} + Ke_k),
\end{equation}
then an Euler discretization of the continuous time error dynamics \eqref{eq:ct_error_dynamics} yields 
\begin{equation}
    e_{k+1} = e_k + \Delta t(A+BK)e_k + \Delta t BW_k\phi_k + O(\Delta t^2)
\end{equation}
where we have set $W_k = W(x_k, \xi_k, y^{\gamma}_{d,k} + Ke_k)$.
Thus,  letting $\bar{A} = (I + \Delta t(A + BK))$, for small $\Delta t >0$ the continuous-time cost is well approximated by
\begin{equation}\label{eq:dt_reward}
    R(t_k) = \frac{1}{2}\|W_k \phi_k \|_2^2 \approx \frac{1}{2}\left\|  \frac{e_{k+1}- \bar{A}e_k}{\Delta t} \right\|_{2}^{2} \colon = R_k(x_k, e_k,u_k),
\end{equation}
where we note that $e_k$ and $e_{k+1}$ are both quantities which can be measured by numerically differentiating the outputs from the plant. Intuitively, the sampled-data cost $R_k$ provides a measure for how well the control $u_k$ matches the desired change in the tracking error  \eqref{eq:ct_error_dynamics} over the interval $[t_k, t_{k+1})$.

Next, we add probing noise to the control law \eqref{eq:dt_controller} to ensure that the input is sufficiently exciting and to enable the use of policy-gradient methods for estimating the gradient of the discrete-time cost signal. In particular, we will draw the input according as $u_k \sim \pi_k(\cdot |\theta_k, x_k, e_k)$, where 
\begin{equation}
\pi_k(\cdot | \theta_k , x_k, e_k) = \hat{u}\bigg(\theta_k, x_k, y_{d,k}^{\gamma} + K(\xi_{d,k} - \xi_k)\bigg) +\mathcal{W}_k
\end{equation}
and $\mathcal{W}_k = \mathcal{N}(0, \sigma ^2I)$ is additive zero-mean Gaussian noise. Methods for selecting the variance-scaling term $\sigma^2$ will be discussed below, however for now it is sufficient to assume that $\sigma^2$ is bounded. 

With the addition of the random noise we now define
\begin{equation}
    J_k(\theta_k) = \mathbb{E}_{u_k \sim \pi_k(\theta_k, x_k, e_k)} R_k(x_k, e_k, u_k),
\end{equation}
noting that it is also common for policy gradient methods to use an expected ``cost-to-go'' as the objective.
Regardless, using the policy-gradient theorem \cite{sutton2000policy}, the gradient of $J_k$ can be written as
\begin{equation}
    \nabla_{\theta_k}J_k(\theta_k) = \mathbb{E}_{\pi_k}R_k(x_k,\xi_k, u_k)\,\cdot  \nabla_{\theta_k} \log \mathbb{P}\{\pi_k(u_k |\theta_k,x_k, e_k)\}
\end{equation}
where the expectation accounts for randomness due to the input $u_k = \pi_k(u_k |\theta_k, x_k, e_k)$. 

Moreover, a noisy, unbiased estimate of $\nabla J_k$ is given by
\begin{equation}\label{eq:grad_estimate}
    \hat{J}_k = R_k(x_k, \xi_k, u_k) \nabla_{\theta_k}\log\big(\mathbb{P}\{\pi(u_k |\theta_k, \theta_k, x_k, e_k)\}\big)
\end{equation}
where $u_k = \pi_k$ and is the actual input applied to the plant over the k-th time interval. Recall that $R_k(x_k, e_k,u_k)$ can be directly calculated using $e_k$, $e_{k+1}$ and \eqref{eq:dt_reward}, and $\nabla_{\theta_k}\mathbb{P}\{\log(\pi(u_k |\theta_k, s_k))\}$ can also be computed since the derivatives of $\hat{u}$ (and thus of $\log \mathbb{P}\{\pi_k\}$) are known to us. Thus, $\hat{J}_k$ can be computed using values that we have assumed we can measure. However, since the input $u_k$ is random, the gradient estimate is drawn according to 
\begin{equation}
    \hat{J}_k \sim \Delta \hat{J}_k(\cdot|  \theta_k, x_k, e_k)
\end{equation}
where the random variable is constructed using the relationship \eqref{eq:grad_estimate}. Using our estimate of the gradient for the discrete-time reward we propose the following noisy update rule for the parameters of our learned controller:
\begin{equation}
    \theta_{k+1} = \theta_k - \Delta t \hat{J}_k
\end{equation}
Putting it all together, the sampled-data stochastic version of our error dynamics becomes
\begin{align}\label{eq:total_dt_dynamics}
    e_{k+1} &= e_k + H_k(x_k, e_k,u_k)\\\nonumber 
    \phi_{k+1} &= \phi_k - \Delta t \hat{J}_k&
\end{align}
where $u_k = \pi_k$ and $\hat{J}_k$ is calculated as in \eqref{eq:grad_estimate}. We make the following Assumptions about this stochastic process: 

\begin{assumption}\label{ass:bounded_noise}\vspace{0.1cm}
There exists a constant $C>0$ such that $\sup_{k\geq0}\| w_k\| < C$ almost surely. \vspace{0.1 cm}
\end{assumption}

\begin{assumption}\label{ass:bounded_trajectories}\vspace{0.1cm}
There exists a constant $C>0$ such $\sup_{k \geq 0} \|x_k \| < C$ and $\sup_{k \geq 0} \|\theta_k \| < C$ almost surely.\vspace{0.1cm}
\end{assumption}

Assumption \ref{ass:bounded_noise} ensures that the additive noise does not drive the state to be unbounded during a single sampling interval, while Assumption \ref{ass:bounded_trajectories} ensures that the gradient estimate does not become undefined during the learning process. These important technical assumptions are common in the theory of stochastic approximations \cite{borkar2009stochastic}, and allow us to characterize the estimator for the gradient as follows:

\begin{lemma}\vspace{0.1cm}\label{lemma:grad_estimate}
Let Assumptions \ref{ass:bounded_reference}-\ref{ass:bounded_trajectories} hold. Then $\Delta \hat{J}_k(\cdot | \theta_k, x_k, e_k)$ is a sub-Gaussian distribution where
\begin{equation}\label{eq:grad_error}
    \mathbb{E}[\Delta \hat{J}_k(\cdot | \theta_k, x_k, e_k)] = W_k^TW_k\phi_{k} + O(\Delta t(1 + \sigma + \sigma^2))
\end{equation}
and 
\begin{equation}
    \norm{\Delta \hat{J}_k(\cdot | \theta_k, x_k, e_k)}_{\psi_2}= O\left(\frac{1}{\sigma}\right).
\end{equation}\vspace{0.1cm}
\end{lemma}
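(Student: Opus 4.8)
The plan is to exploit the special structure of the Gaussian exploration policy, whose score function has a simple closed form, and then evaluate the expectation of the policy-gradient estimate \eqref{eq:grad_estimate} using Gaussian moment identities. First I would record the score. Since $\pi_k(\cdot\mid\theta_k,x_k,e_k)=\mathcal N(\mu_k,\sigma^2 I)$ with mean $\mu_k=\hat u(\theta_k,x_k,v_k)$ and $v_k=y^{\gamma}_{d,k}+Ke_k$, the realized input satisfies $u_k-\mu_k=\mathcal W_k$ and
\begin{equation}
\nabla_{\theta_k}\log\mathbb P\{\pi_k(u_k\mid\theta_k,x_k,e_k)\}=\tfrac{1}{\sigma^2}G_k^T\mathcal W_k,
\end{equation}
where $G_k=\tfrac{\partial\hat u}{\partial\theta}(\theta_k,x_k,v_k)\in\R^{q\times(K_1+K_2)}$ is the Jacobian of $\hat u$ in $\theta$; by the linear parameterization \eqref{eq:linparam} this Jacobian is independent of $\theta$, with columns $\beta_j(x_k)$ and $\alpha_j(x_k)v_k$. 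Next I would expand the reward. Carrying the Euler expansion of \eqref{eq:ct_error_dynamics} one step further while retaining the effect of the applied noisy input shows $\tfrac{e_{k+1}-\bar A e_k}{\Delta t}=B\big(W_k\phi_k+A_p(x_k)\mathcal W_k\big)+O(\Delta t)$, so that, using $B^TB=I$,
\begin{equation}
R_k=\tfrac12\big\|W_k\phi_k+A_p(x_k)\mathcal W_k\big\|_2^2+O(\Delta t).
\end{equation}
The single most important identity for the whole argument is that the same linearization that produces the residual $BW_k\phi_k$ in \eqref{eq:ct_error_dynamics} also yields $W_k=A_p(x_k)G_k$, equivalently $G_k^TA_p^T(x_k)=W_k^T$; this is what ties the model-free score back to the ideal least-squares regressor of \eqref{eq:ct_gradient}.

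With these ingredients the mean is a Gaussian moment computation. Writing $a=W_k\phi_k$ and substituting into $\mathbb E[\hat J_k]=\tfrac{1}{\sigma^2}\mathbb E[R_k\,G_k^T\mathcal W_k]$, I would expand the square in $R_k$ into a constant term $\tfrac12\|a\|_2^2$, a term linear in $\mathcal W_k$, and a term quadratic in $\mathcal W_k$. Pairing each with the mean-zero score, the constant and quadratic pieces produce odd moments of $\mathcal W_k$ and vanish, while the linear piece gives $\tfrac{1}{\sigma^2}\mathbb E[(a^TA_p\mathcal W_k)(G_k^T\mathcal W_k)]=G_k^TA_p^T a=W_k^TW_k\phi_k$ after using $\mathbb E[\mathcal W_k\mathcal W_k^T]=\sigma^2I$ and the identity above. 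This recovers exactly the ideal continuous-time gradient \eqref{eq:ct_gradient}. The residual bias comes from the $O(\Delta t)$ remainder in $R_k$: expanding it in powers of the noise scale $\sigma$ and pairing with the score, the purely deterministic part of the remainder is annihilated by the zero-mean score, and the surviving noise-dependent pieces assemble into the advertised $O(\Delta t(1+\sigma+\sigma^2))$ term.

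For the sub-Gaussian bound I would argue by almost-sure boundedness. Under Assumption \ref{ass:bounded_trajectories} the state $x_k$ and parameters $\theta_k$ lie in a compact set, so by smoothness $W_k$, $G_k$, $A_p(x_k)$ and $\phi_k$ are uniformly bounded; under Assumption \ref{ass:bounded_noise} the realized exploration noise obeys an almost-sure bound whose scale is set by $\sigma$, so writing $\mathcal W_k=\sigma\omega_k$ we have $\|\omega_k\|$ bounded a.s. Hence $R_k=O(1)$ while the score $\tfrac{1}{\sigma^2}G_k^T\mathcal W_k=\tfrac1\sigma G_k^T\omega_k=O(1/\sigma)$, so their product $\hat J_k$ is bounded almost surely by a constant multiple of $1/\sigma$. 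Since a random variable bounded almost surely by $M$ is sub-Gaussian with $\|\cdot\|_{\psi_2}=O(M)$, this yields $\|\Delta\hat J_k\|_{\psi_2}=O(1/\sigma)$.

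The main obstacle is the bias bookkeeping in the second step. It is not enough to note that the remainder is $O(\Delta t)$: because the estimate carries the $\sigma^{-2}$ prefactor, a remainder piece that survived against the mean-zero score could a priori contaminate the mean with an $O(\Delta t/\sigma)$ or even $O(\Delta t/\sigma^2)$ term. Showing that the leading deterministic part of the remainder integrates to zero against the score, and tracking the $\sigma$-powers of the surviving terms so that they land precisely on $O(\Delta t(1+\sigma+\sigma^2))$, requires carefully controlling how the noisy input enters the second-order term of the Euler expansion; this is where the smoothness of the plant and the a.s. bounds of Assumptions \ref{ass:bounded_noise}--\ref{ass:bounded_trajectories} do the real work. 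A secondary, bookkeeping-level obstacle is verifying the identity $W_k=A_p(x_k)G_k$, which requires unwinding the exact definition of $W$ referenced just after \eqref{eq:ct_error_dynamics}.
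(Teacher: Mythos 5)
Your proposal is essentially correct but follows a genuinely different route from the paper on both halves of the lemma. For the mean, the paper does not evaluate $\mathbb{E}[R_k\,\nabla_\theta\log\pi_k]$ by Gaussian moment identities; instead it uses the reparameterization $u_k=\hat u_{\theta_k}+w_k$ to write $\mathbb{E}[\hat J_k]=\mathbb{E}_{w_k}[\nabla_{\theta_k}R_k(x_k,e_k,\hat u_{\theta_k}+w_k)]$ and differentiates the expanded reward pathwise, which requires a sensitivity (variational-equation) analysis of the flow over $[t_k,t_{k+1})$ with respect to $\theta_k$ to bound the terms $\partial T_j/\partial\theta_k$. The payoff of the paper's route is that the $\sigma^{-2}$ prefactor of the score never appears, so the bias bound $O(\Delta t(1+\sigma+\sigma^2))$ follows immediately from $\mathbb{E}\|w_k\|=O(\sigma)$, $\mathbb{E}\|w_k\|^2=O(\sigma^2)$. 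Your route makes the leading term $W_k^TW_k\phi_k$ emerge very cleanly — your identity $W_k=A_p(x_k)G_k$ is correct and is exactly what the paper's implicit computation of $W$ amounts to — but it shifts the hard work onto the remainder, and you correctly diagnose that: without centering the $O(\Delta t(1+\|w_k\|+\|w_k\|^2))$ remainder against the zero-mean score, the $\sigma^{-2}$ prefactor would leave an $O(\Delta t/\sigma)$ bias that the lemma does not permit. Establishing that the remainder is Lipschitz in $w_k$ with constant $O(\Delta t(1+\|w_k\|))$ so that the centered pairing lands on $O(\Delta t(1+\sigma))$ is essentially the same sensitivity bookkeeping the paper performs with respect to $\theta_k$, just with respect to $w_k$ instead; so the two proofs cost about the same.

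On the sub-Gaussian norm there is one point to tighten. The paper shows $w_k\mapsto\hat J_k=R_k\,\sigma^{-2}w_k^T\frac{\partial}{\partial\theta_k}\hat u_{\theta_k}$ is Lipschitz with constant $O(\sigma^{-2})$ and invokes Gaussian concentration for Lipschitz functions, which yields $\|\hat J_k\|_{\psi_2}=O(\sigma\cdot\sigma^{-2})=O(1/\sigma)$. Your argument instead uses almost-sure boundedness and needs the realized noise to satisfy $\|w_k\|=O(\sigma)$ a.s.; but Assumption \ref{ass:bounded_noise} as stated only gives a $\sigma$-independent bound $C$, under which your argument yields only $O(1/\sigma^2)$. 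Your conclusion is recoverable if one reads the truncated noise as $\sigma$ times a bounded standard variate, but as written this step either silently strengthens Assumption \ref{ass:bounded_noise} or loses a factor of $\sigma$; the Lipschitz-concentration argument avoids the issue because the factor of $\sigma$ it gains comes from the scale of the Gaussian itself rather than from an a.s. bound.
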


The Lemma demonstrates a trade-off between the bias and variance of the gradient estimate that has been observed in the reinforcement learning literature \cite{silver2014deterministic,zhao2011analysis}. Specifically, the bias of the gradient estimate decreases as $\sigma^2 \to 0$ but this causes the gradient of the estimator to blow up, as indicated by the increasing sub-Gaussian norm. However, the bias of the gradient estimate has a term which is $O(\Delta t)$ which does not depend on the amount of noise added to the system. This term comes from the fact that we have resorted to using a finite difference approximation \eqref{eq:dt_reward} to approximate the gradient of the continuous-time reward in the sampled data setting. Due to this inherent bias, little is gained by decreasing $\sigma^2$ past the point where $\sigma^2 = O(\Delta t)$. Next, we analyze the overall behavior of \eqref{eq:total_dt_dynamics}.

\subsection{Convergence analysis}\label{subsec:convergence}
The main idea behind our analysis is to model our sampled-data error dynamics \eqref{eq:total_dt_dynamics} as a perturbation to the idealized continuous-time error dynamics \eqref{eq:total_ct_dynamics}, as is commonly done in the stochastic approximation literature \cite{borkar2009stochastic}. Under the assumption that $W^TW$ is persistently exciting, the nominal continuous time dynamics are exponentially stable and we observe that the total perturbation accumulated over each sampling interval decays exponentially as time goes on. Due to space constraints, we outline the main points of the analysis here but leave the details to the technichal report. 

 Our analysis makes use of the piecewise-linear curve $\bar{\phi} \colon \R \to \R^{K}$ which is constructed by interpolating between $\phi_k$ and $\phi_{k+1}$ along the interval $[t_k, t_{k+1})$. That is, we define
\begin{equation*}
    \bar{\phi}(t) = \left( \frac{t_{k+1} -t}{\Delta t} \right) \phi_k + \left(\frac{t -t_{k}}{\Delta t} \right)\phi_{k+1} \ \text{ if } t \in [t_k, t_{k+1}).
\end{equation*}
Combining the tracking and interpolated tracking error into the state $X = (e^T, \phi^T)^T$ we may write
\begin{equation}\label{eq:perturbed_dynamics}
\frac{d}{dt}X(t) = A(t)X(t) + \delta(t)
\end{equation}
where  for each $t \in \R$ the dynamics matrix $A(t)$ constructed as in \eqref{eq:total_ct_dynamics} and the disturbance $\delta \colon \R\to \R^{|\gamma| + K}$ captures the deviation from the idealized continuous dynamics caused at each instance of time due the sampling, additive noise, and the process of interpolating the parameter error. Again letting $\Phi(t,\tau)$ denote the solution to $\frac{d}{dt}\Phi(t,\tau) =A(t) \Phi(t,\tau)$ with initial condition $\Phi(s,s) = I$, for each $t,s \in \R$ we have that
\begin{equation}
    X(t) = \Phi(t,0)X(0) + \int_{0}^{t} \Phi(t,\tau)\delta(\tau) d\tau
\end{equation}
Now, if we let $X_k = X(t_k)$ for each $k \in \mathbb{N}$ we can instead write
\begin{equation} \label{eq:dt_disturbance}
    X_k = \Phi(t_k,0)X_0 + \sum_{i=1}^{k-1}\Phi(t_k,t_{i+1})\underbrace{\int_{t_i}^{t_{i+1}}{\Phi(t_{i+1}, \tau)\delta(\tau) d\tau}}_{\delta_k},
\end{equation}
where the term $\delta_k \in \R^{|\gamma| + K}$ is the total disturbance accumulated over the interval $[t_k, t_{k+1})$. We separate the effects the distubance has on the tracking and error dynamics by letting $\delta_k^e \in \R^{|\gamma|}$ denote the first $|\gamma|$ elements of $\delta_k$ and letting $\delta_k^\phi \in \R^K$ denote the remaining entries.  
On the interval $[t_k, t_{k+1})$ the disturbance $\delta(t)$ can be written as a function of $u_k$, $x_k$ and $e_k$. Since $u_k$ is a random function of $x_k$, for fixed $x_k$, $e_k$ and $\theta_k$, the two elements of $\delta_k$ are distributed according to
\begin{equation}
    \delta_{k}^e \sim \Delta_k^e(\cdot|\theta_k, x_k, e_k) \ \  \text{ and } \ \ \delta_{k}^\phi \sim \Delta_k^\phi(\cdot|\theta_k. x_k, e_k).
\end{equation}
These random variables are constructed by integrating the distrubance over $[t_k,t_{k+1})$ and an explicit representation of these variable can be found in the proof of the following Lemma, which can be found in the Appendix. 

\begin{lemma} \label{lemma:noise_characterization}
Let Assumptions \ref{ass:bounded_reference}-\ref{ass:bounded_trajectories} hold. Then $ \Delta_{k}^e(\cdot|\theta_k, x_k, e_k)$ and $ \Delta_{k}^\phi(\cdot|\theta_k, x_k, e_k)$ are sub-Gaussian random variables where
\begin{equation}\label{lemma:total_disturbance}
   \| E[ \Delta_{k}^e(\cdot|\theta_k, x_k, e_k) ]\|_2 =O(\Delta t^2(1 + \sigma+ \sigma^2) ) \vspace{-1.5em}
\end{equation}
\begin{equation}
\| E[ \Delta_{k}^\phi(\cdot|\theta_k, x_k, e_k) ]\|_2 =O(\Delta t^2(1 + \sigma+ \sigma^2)) \vspace{-1em}
\end{equation}
\begin{equation}
    \|\Delta_k^e(\cdot |\theta_k, x_k,e_k)\|_{\psi_2} = O(\Delta t \sigma) \vspace{-1em}
\end{equation}
\begin{equation}
    \|\Delta_k^\phi(\cdot |\theta_k, x_k,e_k)\|_{\psi_2} = O\left(\frac{\Delta t }{\sigma}\right).
\end{equation}
\vspace{0.5em}
\end{lemma}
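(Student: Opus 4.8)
The plan is to obtain explicit expressions for the per-interval disturbances $\delta_k^e$ and $\delta_k^\phi$ by comparing the interpolated sampled trajectory against the idealized flow \eqref{eq:total_ct_dynamics}, and then to propagate estimates on the instantaneous disturbance $\delta(\tau)$ through the integral $\delta_k=\int_{t_k}^{t_{k+1}}\Phi(t_{k+1},\tau)\delta(\tau)\,d\tau$. First I would write $\delta(\tau)$ componentwise on a single interval $[t_k,t_{k+1})$. In the parameter block the interpolation gives $\dot{\bar\phi}(\tau)=(\phi_{k+1}-\phi_k)/\Delta t=-\hat J_k$, so $\delta^\phi(\tau)=-\hat J_k+W(\tau)^TW(\tau)\bar\phi(\tau)$; in the tracking block $\delta^e(\tau)$ is the difference between the true error velocity under the held, noisy control $u_k$ and the idealized drift $(A+BK)e(\tau)+BW(\tau)\bar\phi(\tau)$, i.e.\ it collects the sample-and-hold error and the injected probing noise. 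A crucial preliminary observation is that $\Phi$ is uniformly bounded over a single step: by Lemma \ref{lemma:pe}, $\|\Phi(t_{k+1},\tau)\|\le M$ whenever $\tau\in[t_k,t_{k+1}]$, since then $t_{k+1}-\tau\le\Delta t$. Hence every bound on $\delta_k$ reduces, up to the constant $M$, to integrating the corresponding bound on $\delta(\tau)$ over an interval of length $\Delta t$.

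For the bias I would take expectations inside the integral and exploit a cancellation of the leading drift. In the parameter block, Lemma \ref{lemma:grad_estimate} gives $\mathbb{E}[\hat J_k]=W_k^TW_k\phi_k+O(\Delta t(1+\sigma+\sigma^2))$, while smoothness and boundedness (Assumption \ref{ass:bounded_trajectories}) yield $\mathbb{E}[W(\tau)^TW(\tau)\bar\phi(\tau)]=W_k^TW_k\phi_k+O(\Delta t(1+\sigma+\sigma^2))$, because $W$ is $C^1$ and the expected deviation $\mathbb{E}[\bar\phi(\tau)-\phi_k]$ is $O(\Delta t)$ on the interval. The two $W_k^TW_k\phi_k$ terms cancel, leaving $\|\mathbb{E}[\delta^\phi(\tau)]\|_2=O(\Delta t(1+\sigma+\sigma^2))$; integrating over length $\Delta t$ and applying the $\Phi$ bound produces the extra factor and gives $\|\mathbb{E}[\Delta_k^\phi]\|_2=O(\Delta t^2(1+\sigma+\sigma^2))$. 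For the tracking block the same conclusion follows from the $O(\Delta t^2)$ Euler error underlying \eqref{eq:dt_reward}–\eqref{eq:total_dt_dynamics}, together with the fact that the zero-mean noise $\mathcal{W}_k$ survives in expectation only through the nonlinear remainder of $H_k$, which contributes the $(1+\sigma+\sigma^2)$ factor.

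For the sub-Gaussian norms I would invoke the standard calculus of $\|\cdot\|_{\psi_2}$: it is subadditive and positively homogeneous and is preserved under the bounded linear map $\Phi(t_{k+1},\tau)$, so $\|\Delta_k^\bullet\|_{\psi_2}\le M\,\Delta t\,\sup_\tau\|\delta^\bullet(\tau)\|_{\psi_2}$. In the parameter block the stochastic content of $\delta^\phi$ is carried by $\hat J_k$, whose sub-Gaussian norm is $O(1/\sigma)$ by Lemma \ref{lemma:grad_estimate}, yielding $\|\Delta_k^\phi\|_{\psi_2}=O(\Delta t/\sigma)$. In the tracking block the randomness enters through the additive Gaussian $\mathcal{W}_k=\mathcal{N}(0,\sigma^2 I)$ with $\|\mathcal{W}_k\|_{\psi_2}=O(\sigma)$, scaled by the step, giving $\|\Delta_k^e\|_{\psi_2}=O(\Delta t\,\sigma)$.

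The hard part will be the tracking block, because $H_k$ is \emph{not} affine in the input, so the probing noise propagates through a nonlinear map. I would Taylor-expand $H_k$ about the noise-free control and separate the term linear in $\mathcal{W}_k$ (which is genuinely sub-Gaussian and produces the $O(\Delta t\,\sigma)$ norm) from the higher-order terms, which are products of Gaussians and are therefore only sub-exponential. Controlling these and confirming that the integrated disturbance remains sub-Gaussian is precisely where the almost-sure boundedness of the state, parameters, and noise (Assumptions \ref{ass:bounded_noise}–\ref{ass:bounded_trajectories}) is needed to truncate the expansion; it is also the mechanism that forces the $\sigma^2$ contribution to appear in the bias estimates above.
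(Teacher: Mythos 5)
Your decomposition of the per-interval disturbance and your bias estimates match the paper's own proof: the paper likewise writes $\delta_k^\phi=\int_{t_k}^{t_{k+1}}\bigl(\hat J_k-W(t)^TW(t)\bar\phi(t)\bigr)dt$, cancels the leading $W_k^TW_k\phi_k$ against $\mathbb{E}[\hat J_k]$ via Lemma \ref{lemma:grad_estimate}, and controls the tracking block by an explicit list of remainder terms $T_1,\dots,T_7$, each shown to be $O(\Delta t^2(1+\norm{w_k}))$ using Lemmas \ref{lemma:tot_bound} and \ref{lemma:bounded_delta}, with the zero-mean term $\Delta t\,A_p(x_k)w_k$ dropping out in expectation. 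Where you genuinely diverge is in establishing sub-Gaussianity of $\Delta_k^e$. You correctly flag that $H_k$ is nonlinear in the held input, so the terms quadratic in $\mathcal{W}_k$ are a priori only sub-exponential, and you propose a Taylor expansion truncated via the almost-sure bound on $w_k$. The paper sidesteps this with a single device: it shows that $w_k\mapsto\delta_k^e$ and $w_k\mapsto\delta_k^\phi$ are Lipschitz functions of the Gaussian $w_k$, with Lipschitz constants $O(\Delta t)$ and $O(\Delta t/\sigma^2)$ respectively, and invokes the Gaussian concentration theorem of \cite[Theorem 2.6.3]{vershynin2018high}, which gives $\norm{T(X)}_{\psi_2}\leq C L\sigma$ for any $L$-Lipschitz $T$; this absorbs all nonlinear terms at once and directly yields the $O(\Delta t\,\sigma)$ and $O(\Delta t/\sigma)$ scalings. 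Your truncation route also closes (on the bounded noise support the quadratic remainder is dominated pointwise by a constant times $\norm{w_k}$, so monotonicity of the Orlicz norm gives the same order), but it is heavier than necessary. One small caution: you justify $\norm{\Phi(t_{k+1},\tau)}\leq M$ by Lemma \ref{lemma:pe}, which requires persistence of excitation --- not among the hypotheses of this lemma; over a single step the needed bound follows instead from Gronwall and the boundedness of $A(\cdot)$ under Assumptions \ref{ass:bounded_reference}--\ref{ass:bounded_trajectories} (the paper in effect treats the one-step transition weight as an $O(1)$ constant).
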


Next, for each $k \in \mathbb{N}$ we put  $\epsilon_k^e = E[ \Delta_{k}^e(\cdot|\theta_k, x_k, e_k) ] \in \R^{|\gamma|}$, ${\epsilon_k^\phi = E[ \Delta_{k}^\phi(\cdot|\theta_k, x_k, e_k) ]} \in \R^{K}$ and then define the zero-mean random variables $\mathcal{M}_k^e = \Delta_{k}^e(\cdot|\theta_k, x_k, e_k) - \epsilon_k^e$ and $\mathcal{M}_k^\phi = \Delta_{k}^\phi(\cdot|\theta_k, x_k, e_k) - \epsilon_k^\phi$. Our overall discrete-time process can then be written as 
\begin{equation}
    X_k = \Phi(t_k,0)X_0  +  \sum_{i=0}^{k-1}\Phi(t_k,t_{i+1})(\epsilon_i + \mathcal{M}_i).
\end{equation}
where $\epsilon_k \in \R^{|\gamma| + K}$ is constructed by stacking $\epsilon_k^e$ on top of $\epsilon_k^\phi$ and $\mathcal{M}_k$ is constructed by stacking $\mathcal{M}_k^e$ on top of $\mathcal{M}_k^\phi$. Now if we assume that $W^TW$ is persistently exciting, then for each $k_1, k_2 \in \mathbb{N}$ we have
\begin{equation}
    \norm{\Phi(t_{k_1}, t_{k_2})}  \leq Me^{-\zeta \Delta t(k_1 -k_2)} = M\rho^{k_1 -k_2}
\end{equation}
where $M>0$ and $\zeta>0$ are as in Lemma \ref{lemma:pe} and we have put $\rho = e^{-\zeta \Delta t}<1$. Thus, under this assumption we may use the triangle inequality to bound
\begin{equation}\label{eq:exp_decay}
    |X_k| \leq M\left( \rho^k |X_0| + \sum_{i=0}^{k-1}\rho^{k-i}|\epsilon_k| +  |\sum_{i=0}^{k-1}\rho^{k-i}\mathcal{M}_k |\right).
\end{equation}
Thus, when $W^TW$ is persistently exciting we see that the effects of the disturbance accumulated at each time step decays exponentially as time goes on, along with the effects of the initial tracking and parameter error. A full proof for the following Theorem is given in the Appendix, but the main idea is to use properties of geometric series to bound $ \sum_{i=0}^{k-1}\rho^{k-i}|\epsilon_k|$ over time and to use the concentration inequality from \cite[Theorem 2.6.3]{vershynin2018high} to bound the deviation of $|\sum_{i=0}^{k-1}\rho^{k-i}\mathcal{M}_k |$.

\begin{theorem}\label{thm:convergence}\vspace{0.1cm}
Let Assumptions \ref{ass:bounded_reference}-\ref{ass:bounded_trajectories} hold. Further assume that $W^T W$ is persistently exciting and let $M>0$ and $\zeta>0$ be defined as in Lemma \ref{lemma:pe}. Then there exists numerical constants $C_1>0$ and $C_2>0$ such that 
\begin{equation}\label{eq:total_bias}
   |\mathbb{E}[X_k]| \leq M\rho^k | X_0| + M C_1\frac{\Delta t(1 + \sigma + \sigma^2)}{\zeta}
\end{equation}
and for each $\lambda>0$ with probability $1-\lambda$ we have
\begin{equation}\label{eq:total_var}
    |X_k - \mathbb{E}[X_k]| \leq C_2 M\sqrt{\frac{ \Delta t \ln\left( \frac{2}{\lambda}\right)}{\zeta \sigma^2}}
\end{equation}\vspace{0.1cm}
\end{theorem}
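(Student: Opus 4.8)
The plan is to treat the two claims separately, both starting from the representation
\begin{equation*}
    X_k = \Phi(t_k,0)X_0  +  \sum_{i=0}^{k-1}\Phi(t_k,t_{i+1})\paren{\epsilon_i + \mathcal{M}_i}
\end{equation*}
derived above, in which $\epsilon_i$ collects the conditional means supplied by Lemma \ref{lemma:noise_characterization} and $\mathcal{M}_i$ is the conditionally zero-mean sub-Gaussian remainder. The bias bound \eqref{eq:total_bias} follows by taking expectations and controlling a geometric series of the mean terms, while the concentration bound \eqref{eq:total_var} follows by applying the sub-Gaussian Hoeffding inequality \cite[Theorem 2.6.3]{vershynin2018high} to the weighted sum of the martingale terms. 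Throughout I would use the operator-norm estimate $\norm{\Phi(t_{k}, t_{i})} \le M\rho^{k-i}$ with $\rho = e^{-\zeta \Delta t}$ guaranteed by Lemma \ref{lemma:pe}, which holds uniformly in the realized trajectory under Assumptions \ref{ass:bounded_noise}--\ref{ass:bounded_trajectories}.

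For the bias, since $\mathbb{E}[\mathcal{M}_i] = 0$ I would take expectations to obtain $\mathbb{E}[X_k] = \Phi(t_k,0)X_0 + \sum_{i=0}^{k-1}\Phi(t_k,t_{i+1})\epsilon_i$, then apply the triangle inequality together with $\norm{\Phi(t_k,t_{i+1})} \le M\rho^{k-i}$ and the estimate $\abs{\epsilon_i} = O(\Delta t^2(1+\sigma+\sigma^2))$ from Lemma \ref{lemma:noise_characterization}. The resulting geometric sum satisfies $\sum_{i=0}^{k-1}\rho^{k-i} \le \frac{1}{1-\rho}$, and the elementary estimate $1 - e^{-\zeta \Delta t} \ge c\,\zeta\Delta t$ for small $\Delta t$ turns $\frac{1}{1-\rho}$ into $O\paren{\frac{1}{\zeta \Delta t}}$. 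Multiplying by the $O(\Delta t^2(1+\sigma+\sigma^2))$ per-step bias produces exactly the $MC_1\frac{\Delta t(1+\sigma+\sigma^2)}{\zeta}$ term, the single factor of $\Delta t$ surviving the cancellation; adding the transient $M\rho^k\abs{X_0}$ gives \eqref{eq:total_bias}.

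For the concentration bound I would write $X_k - \mathbb{E}[X_k] = \sum_{i=0}^{k-1}\Phi(t_k,t_{i+1})\mathcal{M}_i$ and regard this as a weighted sum of zero-mean sub-Gaussian vectors. The variance proxy is $\sum_{i=0}^{k-1}\norm{\Phi(t_k,t_{i+1})\mathcal{M}_i}_{\psi_2}^2 \le M^2\sum_{i=0}^{k-1}\rho^{2(k-i)}\norm{\mathcal{M}_i}_{\psi_2}^2$; using $\norm{\mathcal{M}_i}_{\psi_2} = O\paren{\Delta t/\sigma}$ (the $\phi$-component of Lemma \ref{lemma:noise_characterization} dominating the $e$-component for small $\sigma$), together with $\sum_{i=0}^{k-1}\rho^{2(k-i)} \le \frac{1}{1-\rho^2} = O\paren{\frac{1}{\zeta\Delta t}}$, bounds the proxy by $O\paren{\frac{M^2\Delta t}{\zeta\sigma^2}}$. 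Plugging this into the tail inequality $\mathbb{P}\set{\abs{X_k-\mathbb{E}[X_k]} \ge t} \le 2\exp\paren{-\frac{c\,t^2}{\text{proxy}}}$, setting the right-hand side equal to $\lambda$, and solving for $t$ yields $t = C_2 M\sqrt{\frac{\Delta t \ln(2/\lambda)}{\zeta\sigma^2}}$, which is \eqref{eq:total_var}.

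The main obstacle is justifying the concentration step rigorously, because the summands are not independent: the transition matrices $\Phi(t_k,t_{i+1})$ depend on the realized trajectory and hence on the same probing noise that generates $\mathcal{M}_i$, and the $\mathcal{M}_i$ themselves are only conditionally zero-mean given the filtration $\mathcal{F}_i = \sigma(\theta_i,x_i,e_i)$. I would handle this by conditioning on the realized state trajectory so that each $\Phi(t_k,t_{i+1})$ becomes a fixed coefficient, and by invoking a martingale-difference (Azuma--Hoeffding-type) version of the sub-Gaussian tail bound rather than its independent-sample form; the key point that makes this work is that the decay constants $M$ and $\zeta$ and the per-step sub-Gaussian norms are uniform over trajectories by Lemma \ref{lemma:pe} and Assumptions \ref{ass:bounded_noise}--\ref{ass:bounded_trajectories}, so the conditional bound is trajectory-independent and therefore holds unconditionally.
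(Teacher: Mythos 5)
Your proposal follows essentially the same route as the paper's proof: the same decomposition of $X_k$ into transient, accumulated bias, and martingale terms, the same geometric-series estimates $\frac{1}{1-\rho} = O\paren{\frac{1}{\zeta\Delta t}}$ combined with the $O(\Delta t^2(1+\sigma+\sigma^2))$ per-step bias from Lemma \ref{lemma:noise_characterization}, and the same application of the sub-Gaussian concentration inequality of \cite[Theorem 2.6.3]{vershynin2018high} with variance proxy $O\paren{\frac{M^2\Delta t}{\zeta\sigma^2}}$ followed by solving for $t$. Your closing paragraph on the dependence of $\Phi(t_k,t_{i+1})$ and $\mathcal{M}_i$ on the common probing noise, and the need for a conditional or martingale-difference form of the tail bound, is a point the paper's proof applies the independent-sample inequality without addressing, so your treatment is if anything more careful on that step.
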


Despite the high variance of the simple policy gradient parameter update analyzed so far, the Theorem demonstrates that with high probability our tracking and parameter errors concentrate around the origin. As $\Delta t$ decreases, the bias introduced by the sampling and additive noise diminish, as does the radius of our high-probability bound. These bounds also become tighter as the exponential rate of decay for the idealized continuous time dynamics increases. The Theorem again displays the trade-off between the bias and variance of the learning scheme observed in Section \ref{lemma:noise_characterization}. However, here we still observe in equation \eqref{eq:total_bias} that the bias introduced by the noise is relatively small, meaning $\sigma^2$ does not have to be made prohibitively small so as to degrade the bound in \eqref{eq:total_var}. 

\subsection{Variance Reduction via Baslines}\label{subsec: discussion}
It is common for policy gradients to be implemented with a \emph{baseline} \cite{williams1992simple}. In this case, the gradient estimator in \eqref{eq:grad_estimate} may become biased, though it often has lower variance \cite{sutton2018reinforcement, greensmith2004variance}. The expression with a baseline is
\begin{equation}
    \label{eq:grad_estimate_baseline}
    \hat{J}_k = \big(R_k(x_k, e_k, u_k) - S_k(x_k, e_k,u_k)\big)\,\cdot \nabla_{\theta_k}\log\big(\mathbb{P}\{\pi(u_k |\theta_k, \theta_k, x_k, e_k)\}\big),
\end{equation}
where $S_k(x_k, \xi_k, u_k)$ is an estimate of $R(x_k, \xi_k, u_k)$. If $S_k$ does not depend on $u_k$ then the addition of the baseline does not add any bias to the gradient estimate \cite{sutton2018reinforcement}. For example, in our numerical example below we use a simple sum-of-past-rewards baseline by setting $S_k = \sum_{i=0}^{k-1}R_i$, where $R_i$ is the $i$-th reward recorded. We consider it a matter of future work to rigorously study the effects of this an other common baselines from the reinforcement learned literature within the theoretical framework we have developed.


\section{Numerical Example}
\label{sec:examples}

Our numerical example examines the application of our method to the double pendulum depicted in Figure~\ref{fig:dp} (a), whose dynamics can be found in \cite{shinbrot1992chaos}. With a slight abuse of notation, the system has generalized coordinates $q =(\theta_1, \theta_2)$ which represent the angles the two arms make with the vertical. Letting $x = (x_1,x_2,x_3, x_4)=  (q,\dot{q})$, the system can be represented with a state-space model of the form \eqref{eq:nonlinear_sys}
where the angles of the two joints are chosen as outputs. It can be shown that the vector relative degree is $(2,2)$, so the system can be completely linearized by state feedback.  

\begin{figure*}[h!]\vspace{0.1cm}
\centering
\includegraphics[width=0.9\textwidth, trim=0.1cm 0cm 0.1cm 0cm, clip=true]{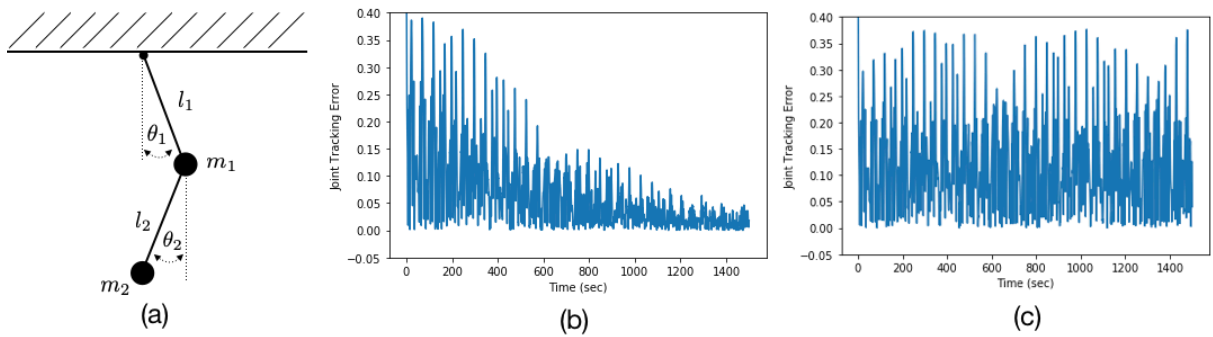}
\caption{(a) Schematic representation of the double pendulum model used in the simulations study. (b) The norm of the tracking error for the adaptive learning scheme (c) The tracking error for the nominal model-based controller with no learning. }
\label{fig:dp}
\vspace{0.1cm}
\end{figure*}

The dynamics of the system depend on the parameters $m_1$, $m_2$, $l_1$, $l_2$ where $m_i$ is the mass of the $i$-th link and $l_i$ its length. For the purposes of our simulation, we set the true parameters for the plant to be $m_1 = m_2 = l_1=l_2 = 1$. However, to set-up the learning problem, we assume that we have inaccurate measurements for each of these parameters, namely, $\hat{m}_1 = \hat{m}_2 = \hat{l}_1 = \hat{l}_2 = 1.3$. That is, each estimated parameter is scales to $1.3$ times its true value. Our nominal model-based linearizing controller $u_m$ is constructed by computing the linearizing controller for the dynamics model which corresponds to the inaccurate parameter estimates. The learned component of the controller is then constructed by using radial basis functions to populate the entries of $\set{\beta_k}_{k=1}^{K_1}$ and $\set{\alpha_k}_{k=1}^{k_2}$. In total, 250 radial basis functions were used.  

For the online leaning problem we set the sampling interval to be $\Delta t = 0.05$ seconds and set the level of probing noise at $\sigma^2 =0.1$. The reward was regularized using an average sum-of-rewards baseline as described in \ref{subsec: discussion}. The reference trajectory for each of the output channels were constructed by summing together sinusoidal functions whose frequencies are non-integer multiples of each other to ensure that the entire region of operation was explored. The feedback gain matrix $K \in \R^{2 \times 4}$ was designed so that each of the eigenvalues of $(A+BK)$ are equal to $-1.5$, where $A\in \R^{4 \times 4}$ and $B \in \R^{4 \times 2}$ are the appropriate matricies in the reference model for the system. 

Figure~\ref{fig:dp} (b) shows the norm of the tracking error of the learning scheme over time while Figure~\ref{fig:dp} (c) shows the norm of the tracking error for the nominal model-based controller with no learning. Note that the learning-based approach is able to steadily reduce the tracking error over time while keeping the system stable.

\section{Conclusion}
\label{sec:conclusion}
This paper developed an adaptive framework which employs model-free policy-gradient parameter update rules to construct a feedback-linearization based tracking controller for systems with unknown dynamics. We combined analysis techniques from the adaptive control literature and theory of stochastic approximations to provide high-confidence tracking guarantees for the closed loops system, and demonstrated the utility of the framework through a simulation experiment. Beyond the immediate utility of the proposed framework, we believe the analysis tools we developed provide a foundation for studying the use of reinforcement learning algorithms for online adaptation.

\appendix
The following Appedicies contain items which were too long to present in the main body of the document.  Appendix A containts two auxiliary Lemmas which are used extensively throughout the main proofs of Lemma \ref{lemma:grad_estimate} in Appendix B,  Lemma \ref{lemma:noise_characterization} in Appendix C, and Theorem \ref{thm:convergence} in Appendix $D$. Appendix E introduces the explicit form of the error equations in equation \eqref{eq:ct_error_dynamics}, and finally Appedix F provides proof for Lemma \ref{lemma:pe}. 
\subsection{Auxiliary Lemmas}

\begin{lemma}\label{lemma:tot_bound}
\ref{ass:bounded_reference}-\ref{ass:bounded_trajectories} hold. Then there exists a constant $C>0$ such that 
\begin{multline}\label{eq:boundy_1}
\sup_{t\in [t_k, t_{k})}\sup \bigg\{ \norm{A_p(x(t))}, \norm{B_p(x(t))}, \norm{f_p(x(t))}, \norm{g_p(x(t))}, \norm{\frac{d}{dx}A_p(x(t))}, \norm{\frac{d}{dx}B_p(x(t))}, \\ \norm{\frac{d}{dx}f_p(x(t))}, \norm{\frac{d}{dx}g_p(x(t))} \bigg \}< C 
\end{multline}
and
\begin{equation}\label{eq:boundy_2}
\norm{\hat{u}(\theta_k, x_k, e_k)} <C
\end{equation}\vspace{0.3cm}
\end{lemma}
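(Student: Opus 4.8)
The plan is to reduce every bound to the elementary fact that a continuous function is bounded on a compact set; the only real work is exhibiting a single compact set that contains the entire plant trajectory $x(t)$ on every sampling interval, uniformly in $k$.

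First I would bound the applied input and dispatch \eqref{eq:boundy_2}. On $[t_k,t_{k+1})$ the control is the constant $u_k = \hat u\paren{\theta_k, x_k, y_{d,k}^\gamma + Ke_k} + \mathcal W_k$. By Assumption \ref{ass:bounded_trajectories} the samples $\theta_k$ and $x_k$ lie in a fixed ball, so $\xi_k$ lies in a fixed ball as well (it is a fixed smooth function of $x_k$); by Assumption \ref{ass:bounded_reference} the signals $\xi_{d,k}$ and $y^{\gamma}_{d,k}$ are uniformly bounded, hence so are $e_k = \xi_k - \xi_{d,k}$ and the auxiliary input $v_k = y_{d,k}^\gamma + K e_k$. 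Using the linear parameterization \eqref{eq:linparam} together with continuity of the smooth maps $\beta_m,\alpha_m$ and the bases $\set{\beta_k},\set{\alpha_k}$ on this compact set, $\hat u\paren{\theta_k,x_k,e_k}$ is bounded, which is exactly \eqref{eq:boundy_2}; adding the bounded noise term (Assumption \ref{ass:bounded_noise}) then yields a uniform bound $\norm{u_k}\le U$ for all $k$ almost surely.

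Next I would propagate this into a bound on the continuous trajectory. On each interval $x(\cdot)$ solves $\dot x = f_p(x) + g_p(x)u_k$ from the bounded initial condition $x_k$ over the fixed horizon $\Delta t$ with bounded constant input $u_k$. Fixing a radius $R$ slightly larger than the sample bound, the quantity $\sup_{\norm{x}\le R}\paren{\norm{f_p(x)} + U\norm{g_p(x)}}$ is finite by continuity of $f_p,g_p$ and compactness of the ball, and a standard continuation argument shows that for $\Delta t$ in the small-step regime considered throughout the paper the solution cannot leave $B_R$ before $t_{k+1}$. Since the initial-condition bound, the input bound $U$, and the horizon $\Delta t$ are all independent of $k$, this produces a single compact set $\mathcal K\subset\R^n$ with $x(t)\in\mathcal K$ for all $t\in[t_k,t_{k+1})$ and all $k$, almost surely.

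Finally, the decoupling matrix $A_p$, the drift $B_p$, the fields $f_p,g_p$, and their Jacobians $\tfrac{d}{dx}A_p,\tfrac{d}{dx}B_p,\tfrac{d}{dx}f_p,\tfrac{d}{dx}g_p$ are all smooth — the decoupling matrix and drift are assembled from finitely many Lie derivatives of the smooth data $f_p,g_p,h_p$ — hence continuous, hence bounded on the compact set $\mathcal K$. Taking $C$ to be the maximum of these finitely many suprema together with the bound on $\hat u$ establishes \eqref{eq:boundy_1}. The main obstacle is the middle step: ruling out a finite-escape excursion of $x(\cdot)$ within a sampling interval, and doing so with a bound that is \emph{uniform} in $k$. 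This is precisely where the uniform-in-$k$ hypotheses of Assumptions \ref{ass:bounded_noise} and \ref{ass:bounded_trajectories} and the fixed sampling horizon $\Delta t$ are essential, since pointwise boundedness of each $x_k$ alone does not control the trajectory between samples.
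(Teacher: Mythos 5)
Your proof is correct and rests on the same basic mechanism as the paper's: continuous (indeed smooth) functions are bounded on compact sets, and the standing assumptions supply the compact set. The paper's own proof, however, is only two sentences — it asserts that \eqref{eq:boundy_1} ``follows directly from Assumption \ref{ass:bounded_trajectories} and the smoothness of the vector field'' and that \eqref{eq:boundy_2} follows from Assumption \ref{ass:bounded_reference} and continuity of the bases. What you add, and what the paper silently skips, is the middle step you correctly flag as the main obstacle: Assumption \ref{ass:bounded_trajectories} bounds the state only at the sample instants $x_k$, whereas the lemma claims a bound on $x(t)$ over the whole interval $[t_k,t_{k+1})$, uniformly in $k$. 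Your chain — bound $\hat u$ from Assumptions \ref{ass:bounded_reference} and \ref{ass:bounded_trajectories} plus the linear parameterization \eqref{eq:linparam}, add the a.s.\ noise bound of Assumption \ref{ass:bounded_noise} to get $\norm{u_k}\le U$, then run a continuation argument on $\dot x = f_p(x)+g_p(x)u_k$ over the fixed horizon $\Delta t$ to trap the trajectory in a single ball $B_R$ — is exactly the missing bridge; note the paper's subsequent Lemma \ref{lemma:bounded_delta} implicitly leans on this same inter-sample bound, so making it explicit also removes a mild circularity between the two auxiliary lemmas. The one caveat in your version is the reliance on a ``small-step regime'': for a fixed $\Delta t$ and superlinearly growing $f_p,g_p$ the continuation argument needs $\Delta t$ small enough that $\sup_k\norm{x_k} + \Delta t\sup_{\norm{x}\le R}\parenn{\norm{f_p(x)}+U\norm{g_p(x)}}\le R$ for some admissible $R$; this is consistent with the paper's asymptotic-in-$\Delta t$ analysis but is an hypothesis the paper never states, so it is worth recording explicitly rather than leaving implicit.
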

\begin{proof}
The bound in \eqref{eq:boundy_1} follows directly from Assumption \ref{ass:bounded_trajectories} and the smoothness for the vector field for the plant. The bound in \eqref{eq:boundy_2} follows from Assumption \ref{ass:bounded_reference} and the continuity of the bases elements $\beta_k$ and $\alpha_k$.
\end{proof}\vspace{0.2cm}
\begin{lemma}\label{lemma:bounded_delta}
Let Assumptions \ref{ass:bounded_reference}-\ref{ass:bounded_trajectories} hold. Then there exits $C>0$ such that for each $t \in [t_k, t_{k+1})$ we have $\norm{\xi(t) -\xi_k} <C\Delta $ and $\norm{x(t) - x_k}<C\Delta t$ and $\norm{\bar{\phi}(t) - \phi_k}C_1 \Delta t.$.\vspace{0.3cm}
\end{lemma}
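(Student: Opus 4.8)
The plan is to establish each of the three estimates by bounding an instantaneous rate of change uniformly over the sampling window and then integrating against its length $\Delta t$, leaning throughout on the uniform bounds supplied by Lemma \ref{lemma:tot_bound}. For the state, recall that over $[t_k, t_{k+1})$ the applied input $u_k = \hat u(\theta_k, x_k, e_k) + \mathcal{W}_k$ is held constant, so $x(\cdot)$ solves $\dot x = f_p(x) + g_p(x)u_k$. I would bound the vector field pointwise: Lemma \ref{lemma:tot_bound} gives $\norm{f_p(x(t))}, \norm{g_p(x(t))} < C$ on the interval, while \eqref{eq:boundy_2} together with Assumption \ref{ass:bounded_noise} bounds $\norm{u_k} \le \norm{\hat u(\theta_k, x_k, e_k)} + \norm{\mathcal{W}_k}$ almost surely. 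Hence $\norm{\dot x(t)}$ is bounded by a constant almost surely, and $\norm{x(t) - x_k} = \norm{\int_{t_k}^{t}\dot x(s)\,ds} \le C(t - t_k) \le C\Delta t$.

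For the linearized coordinates I would use that $\xi$ is a smooth function $\xi = \Xi(x)$ of the state (the stacked Lie derivatives $L_f^{j}h$). Since $x(\cdot)$ remains in a fixed compact set by Assumption \ref{ass:bounded_trajectories} and Lemma \ref{lemma:tot_bound}, $\Xi$ is Lipschitz there with some constant $L$, so $\norm{\xi(t) - \xi_k} = \norm{\Xi(x(t)) - \Xi(x_k)} \le L\norm{x(t) - x_k} \le LC\Delta t$, reusing the state estimate. For the interpolated parameter error I would argue directly from the definition of $\bar\phi$: on $[t_k, t_{k+1})$ we have $\bar\phi(t) - \phi_k = \tfrac{t - t_k}{\Delta t}(\phi_{k+1} - \phi_k)$, and because $\phi = \theta - \theta^*$ the increment equals $\phi_{k+1} - \phi_k = \theta_{k+1} - \theta_k = -\Delta t\,\hat J_k$. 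Thus $\norm{\bar\phi(t) - \phi_k} \le \Delta t\,\norm{\hat J_k}$, and it suffices to bound $\norm{\hat J_k}$ by a constant $C_1$. The Gaussian score satisfies $\nabla_{\theta_k}\log\mathbb{P}\{\pi_k\} = \tfrac{1}{\sigma^2}(\nabla_{\theta_k}\hat u)^{T}\mathcal{W}_k$, whose Jacobian is bounded (it is assembled from the bounded bases $\beta_k, \alpha_k$ at bounded arguments) and whose noise factor is bounded almost surely by Assumption \ref{ass:bounded_noise}; the reward $R_k = \tfrac12\norm{(e_{k+1} - \bar A e_k)/\Delta t}^2$ is likewise bounded, so $\norm{\hat J_k} \le C_1$ almost surely.

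The delicate point is the uniform bound on $\norm{\hat J_k}$, and specifically on $R_k$: the numerator $e_{k+1} - \bar A e_k$ is divided by $\Delta t$, so I would need the Euler-consistency estimate that this one-step residual is genuinely $O(\Delta t)$ — that is, that both the sampling-induced loss of input affinity and the additive noise contribute only first-order-in-$\Delta t$ terms — so that the difference quotient does not blow up as $\Delta t \to 0$. This is exactly the approximation justifying \eqref{eq:dt_reward}, and since this lemma feeds into Lemmas \ref{lemma:grad_estimate} and \ref{lemma:noise_characterization}, I would sequence the arguments so that the $R_k$ bound rests only on Lemma \ref{lemma:tot_bound} and the boundedness assumptions rather than on those downstream characterizations, thereby avoiding circularity. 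The first two estimates are comparatively routine Gr\"onwall/Lipschitz arguments.
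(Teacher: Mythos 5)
Your proposal is correct and follows essentially the same route as the paper's proof: integrate the uniformly bounded closed-loop vector field over one sampling interval for the first two estimates, and reduce the third to an almost-sure bound on $\norm{\hat{J}_k}$ via the explicit interpolation formula. The only differences are cosmetic — the paper bounds $\norm{\xi(t)-\xi_k}$ by integrating $\dot{\xi} = A\xi + B[b_p(x)+A_p(x)u_k]$ directly rather than composing with the Lipschitz output map, and your explicit justification that $R_k$ remains bounded (the one-step residual $e_{k+1}-\bar{A}e_k$ being $O(\Delta t)$ before division by $\Delta t$, established from the $\xi$-bound first so as to avoid circularity) is actually more careful than the paper's, which simply points forward to \eqref{eq:expanded_grad} and asserts boundedness under the standing assumptions.
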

\begin{proof}
First, we have that $\dot{x} = f_p(x(t)) + g_p(x(t))u_k$ on the interval $[t_k, t_{k+1})$. By our standing Assumptions and the continuity of  $f_p$ and $g_p$ there exists a finite constant $C>0$ such that $\sup_{t \in [t_k, t_{k+1}) \norm{} f_p(x(t)) + g_p(x(t))u_k} <C$. This implies that $x(t) = x_k + \int_{t_k}^{t_{k+1}}  f_p(x(t)) + g_p(x(t))u_ \leq \Delta T C$, as desired. Noting that $\dot{\xi} = A\xi + B[b_p(x(t)) + A_p(x(t))u_k]$, the bound on $\norm{\xi(t) -\xi_k}$ follows by an analogous argument. To prove the bound for $\norm{\bar{\phi}(t) -\phi_k}$ we recall that $\bar{\phi}(t) = \hat{J}_{k}\frac{(t - t_k)}{\Delta t} + \phi_k$. However, the expression for $\hat{J}_k$ is given in equation \eqref{eq:expanded_grad} below, and we see is bounded under our standing Assumptions. Thus, there exists a $K>0$ such that $\norm{\hat{J}_k} < K$ and thus we have $\norm{\bar{\phi(t)} -\phi_k } \leq K \Delta t$. The desired result follows from the above observations. 
\end{proof}

\subsection{Proof of Lemma \ref{lemma:grad_estimate}}

Next, we note that we may rewrite
\begin{equation}
\mathbb{E}_{u_k \sim \pi_k(\cdot | \theta_k, x_k, e_k)} \left[ R_k(x_k, e_k, u_k) \right] =  \mathbb{E}_{w_k \sim \mathcal{W}_k} \left[ R_k(x_k, e_k, \hat{u}_{\theta_k} + w_k) \right] 
\end{equation}
where for convienience of notation we have defined
\begin{equation}
\hat{u}_{\theta_k} = \hat{u }(\theta_k, x_k, y_{d,k}^\gamma +Ke_k)
\end{equation}
and suppressed the dependence on $x_k, e_k$ and $y_{d,k}^\gamma$. Thus, we have that
\begin{equation}
\nabla_{\theta_k} J(\theta_k) =  \nabla_{\theta_k} \mathbb{E}_{w_k \sim \mathcal{W}_k} \left[ R_k(x_k, e_k, \hat{u}_{\theta_k} + w_k) \right]  = \mathbb{E}_{w_k \sim \mathcal{W}_k} \left[   \nabla_{\theta_k}R_k(x_k, e_k, \hat{u}_{\theta_k} + w_k) \right].
\end{equation}
Now, since $R_k = \norm{\frac{e_{k+1} - e_k}{\Delta t}}_2^2$ we can calculate $\nabla_{\theta_k}R_k(x_k, e_k, \hat{u}_{\theta_k} + w_k)$ by first  calculating how $e_{k+1}$ varies with $\theta_k$. Now, we have that
\begin{equation}
e_{k+1} = \xi_k - \xi_{d,k} + \int_{t_k}^{t_{k+1}} A \xi(t) + B\left [b_p(x(t)) + A_p(x(t)) \right( \hat{u}_{\theta_k} + w_k\left) \right]- \int_{t_k}^{t_{k+1}}  By_{d}^\gamma(t)dt
\end{equation}
and noting that $A\xi_k + B[b_p(x(t_k)) + A_p(x(t_k)) \hat{u}_{\theta_k}] - By_{d,k}^\gamma = B W_k \phi_k$ we can rewrite the above expression as
\begin{equation}\label{eq:error_decomp}
e_{k+1} = e_k + \Delta t(A + BK)e_k + \Delta t B W_k \phi_k  + \Delta t A_p(x_k)w_k + T_1+ T_2+T_3+ T_4 +T_5
\end{equation}
where we define the terms
\begin{align*}
T_1 &= \int_{t_k}^{t_{k+1}} A(\xi(t) - \xi_k) dt \\
T_2 &=\int_{t_k}^{t_{k+1}}B[b_p(x(t)) - b_p(x_k)]dt \\
T_3 & = \int_{t_k}^{t_{k+1}}B[A_p(x(t)) - A_p(x_k)]\hat{u}_{\theta_k} dt\\
T_4 & = \int_{t_k}^{t_{k+1}}B[A_p(x(t)) -A_p(x_k)]w_k dt\\
T_5 & = \int_{t_k}^{t_{k+1}} y_{d}^\gamma(t) - y_{d,k}^\gamma dt
\end{align*}
Thus, the reward may be rewritten as
\begin{align}\label{eq:expanded_reward}
R_k(x_k, e_k, \hat{u}_{\theta_k}) &=\frac{1}{2} \norm{\frac{e_{k+1} -e_{k}}{\Delta t} }_2^2= \frac{1}{2\Delta t^2} \left(\Delta t B W_k \phi_k+ \Delta t A_p(x_k)w_k  + \sum_{j=1}^5T_j \right) ^T \cdot \left(\Delta t B W_k \phi_k + \Delta t A_p(x_k)w_k+\sum_{j=1}^5T_j \right) \\
&= \frac{1}{2}\norm{W_k\phi_k}^2 +  \phi_k^TW_k^TB^T \cdot A_p(x_k)w_k  + \frac{1}{ \Delta t} \phi_k^TW_k^TB^T \cdot  \sum_{j=1}^5T_j   +  \norm{A_p(x_k) w_k}^2 \\ 
&+ w_k^TA_p(x_k) \cdot  \sum_{j=1}^5T_j  + \frac{1}{\Delta t^2}\sum_{i =1}^5 \sum_{j=1}^5 \left(T_i^T\right) T_j
\end{align}
The gradient of the reward with respect to the learned parameters is given by
\begin{align}\label{eq:expanded_reward2}
\nabla_{\theta_k} R_k(x_k, e_k, \hat{u}_{\theta_k} +w_k) &= W_k^TW_k\phi_k +  W_k^TB^T \cdot A_p(x_k)w_k+  \frac{1}{ \Delta t} W_kB^T  \cdot \sum_{j=1}^5T_j\\ \nonumber
 &+\frac{1}{\Delta t} w_k^TA_p(x_k)^T \cdot  \sum_{j=1}^5 \frac{\partial}{\partial_{\theta_k}}T_j +  \frac{1}{ \Delta t^2}  \sum_{i =1}^5 \sum_{j=1}^5\left( \frac{\partial}{\partial \theta_k}T_i^T\right) T_j 
\end{align} 
Now, $\mathbb{E}_{w_k \sim \mathcal{W}_k}[W_k^TW_k\phi_k] = W_k^TW_k\phi_k$ and $\mathbb{E}_{w_k \sim \mathcal{W}_k}W_k^TB^T \cdot A_p(x_k)w_k =0$. Thus, to obtain the desired bound we need only to bound the terms involving the $T_i$.

First, we will produce bounds for the $T_j$ terms. By Lemma \ref{lemma:bounded_delta} and the continuity of the vector field for the plant, we know that there exists $C_1>0$ such that for each $t \in [t_k, t_{k+1})$ we have $\norm{\xi(t) -\xi_k} \leq C_1 \Delta t$, $\norm{B_p(x(t)) -B_p(x_k)} \leq C_1 \Delta t$ and $\norm{A_p(x(t)) - A_p(x_k)}  \leq C_1\Delta t$. Furthermore, by Lemma \ref{lemma:tot_bound} there exists $C_2>0$ so that for each $t \in [t_k, t_{k+1})$ we have  $\norm{A_p(x(t))} \leq C_2$, $\norm{\frac{d}{dx}b_p(x(t))} \leq C_2$, $\norm{\frac{d}{dx}A_p(x(t))} \leq C_2$, $\norm{\hat{u}_{\theta_k}} \leq C_2$. Finally, by the continuity of $y_d^\gamma$ we know that for there exists $C_3 >0$ such that for each $t \in [t_k, t_{k+1})$ we have $\norm{y_d^\gamma(t) - y_{d,k}^\gamma} \leq C_3 \Delta t$. Putting these facts together we have
\begin{align}\label{eq:norm_bounds}
\norm{ T_1 }&= \norm{ \int_{t_k}^{t_{k+1}} A(\xi(t) - \xi_k) dt } \leq  \int_{t_k}^{t_{k+1}}\norm{A}C_1\Delta t dt = O(\Delta t^2) \\ \nonumber
\norm{T_2 }&=\norm{\int_{t_k}^{t_{k+1}}B[b_p(x(t)) - b_p(x_k)]dt}  \leq  \int_{t_k}^{t_{k+1}}\norm{B}C_1\Delta t dt =  O(\Delta t^2) \\  \nonumber
\norm{T_3} & =\norm{ \int_{t_k}^{t_{k+1}}B[A_p(x(t)) - A_p(x_k)]\hat{u}_{\theta_k} dt} \leq   \int_{t_k}^{t_{k+1}} \norm{B}C_1\Delta t  C_2dt = O(\Delta t^2) \\  \nonumber
\norm{T_4} & = \norm{\int_{t_k}^{t_{k+1}}B[A_p(x(t)) -A_p(x_k)]w_k dt }\leq  \int_{t_k}^{t_{k+1}}\norm{B}C_1 \Delta t \norm{w_k} = O(\Delta t ^2\norm{w_k})\\  \nonumber
\norm{T_5} & = \norm{\int_{t_k}^{t_{k+1}} y_{d}^\gamma(t) - y_{d,k}^\gamma dt} \leq \int_{t_k}^{t_{k+1}} C_3 \Delta dt = O(\Delta t^2).
\end{align}
Next, we bound the terms of the form $\frac{\partial }{\partial \theta_k} T_j$. Since the $T_j$ depend on $x(\cdot)$ and $\xi(\cdot)$, we first bounded how much these trajectories vary over the interval $[t_k, t_{k+1})$ as the learned parameter is changed.
By \cite[Theorem 5.6.2]{polak2012optimization}  and use of the chain rule, for each  $t \in [t_k, t_{k+1)}$ we have $\frac{\partial }{\partial \theta_k} x(t) = \Xi(t)$ where the sensitivity matrix $\Xi(t) \in \R^{n \times (K_1 + K_2)}$ is given for each $t \in [t_k, t_{k+1})$ by 
\begin{equation}\label{eq:sensitivity}
\Xi(t) = \int_{t_k}^{t_{k+1}}\exp\left(\int_{\tau}^{t_{k+1} }A(s)ds\right)B(\tau)  \frac{\partial}{\partial \theta_k}\hat{u}_{\theta_k}d\tau 
\end{equation}
where for each $t \in [t_k, t_{k+1})$ we have $A(t) \in \R^{n\times n}$, $B(t) \in \R^{n \times q}$ and $\frac{\partial}{\partial \theta_k}\hat{u}_{\theta_k} \in \R^{q \times (K_1 + K_2)}$ where 
\begin{equation}
A(t) =\frac{\partial }{\partial x}\bigg(f_p(x(t)) + g_p(x(t))(\hat{u}_{\theta_k} +w_k) \bigg)=  \frac{d}{dx}f_p(x(t)) + \sum_{i=1}^{q}\frac{d}{dx}g_{p,i}(x(t))^T(\hat{u}_{\theta_k} +w_k)^i \ \ \ B(t) = g_p(x(t))
\end{equation}
where $g_{p,i}(x)$ is the $i$-th column of $g_p(x)$ and $(\hat{u}_{\theta_k} +w_k)^i$ is the $i$-th entry of $(\hat{u}_{\theta_k} +w_k)$ and $\frac{\partial}{\partial \theta_k}\hat{u}_{\theta_k} = \frac{\partial}{\partial \theta_k}\hat{u}(\theta_k, x_k, y_{d,k}^\gamma + Ke_k)$. Now, by Assumptions \ref{ass:bounded_reference}-\ref{ass:bounded_trajectories} and the smoothness of $g_p$ and $f_p$ there exists $K_1>0$ and $K_2>0$ such that for each $t \in [t_k, t_{k+1})$ we have $\| A(t)\| \leq K_1$ and $\| B(t)\| \leq K_2$ for any choice of matrix norm. Furthermore, we have for each $i = 1, \dots , K_1$ and $j = 1, \dots, K_2$ we have 
\begin{equation}
\frac{\partial }{\partial \theta_{1}^i}\hat{u}_{\theta_k}= \beta_i(x_k)  \ \ \ \ \ \ \ \ \  \frac{\partial }{\partial \theta_{2}^j}= \alpha_j(x_k)(y_{d,k}^\gamma + Ke_k).
\end{equation}
Thus, by Assumptions \ref{ass:bounded_reference} and \ref{ass:bounded_trajectories} and the continuity of  the $\beta_k$ and $\alpha_k$ we then observe that there must exist $K_3>0$ such that $\norm{\frac{\partial}{\partial_{\theta_k}} \hat{u}_{\theta_k}} \leq K_3$. Using these facts and equation \eqref{eq:sensitivity} for each $t \in [t_k, t_{k+1})$ we have that 
\begin{equation}
\norm{\Xi(t)} \leq \int_{t_k}^{t_{k+1}} \norm{\exp\left(\int_{\tau}^{t_{k+1} }A(s)ds\right)} \cdot \norm{B(\tau)} \cdot \norm{\frac{\partial}{\partial_{\theta_k}} \hat{u}_{\theta_k}} \leq  K_4\Delta t
\end{equation}
where $K_4 = \exp(K_1\Delta t) \cdot K_2 \cdot K_3$. Bound to bound the change in $\xi(t)$ as $\delta_k$ is varied. As discussed in \ref{sec:FBL} there exists a diffeomorphism $S \colon x \to (\xi, \eta)$ which takes the state $x$ to the new coordinates $(\xi, \eta)$. In particular, let $\xi =s_1(x)$ denote the first part of this transformation which gives the values of the outputs and their derivatives. By the chain rule we have that $\frac{\partial}{\partial_{\theta_k}} \xi(t) = \nabla s_1(x(t)) \cdot \frac{\partial }{\partial_{\theta_k}}x(t)$. However, by Assumiption \ref{ass:bounded_trajectories} and the continuity of $\nabla s(x(t))$ we know that $\sup_{[t_k, t_{k+1})} \nabla s(x(t))$ is bounded. Thus, there exists $K_5>0$ such that for each $t \in [t_k, t_{k+1})$ we have $\norm{\frac{\partial}{\partial_{\theta_k}} \xi(t) } \leq K_4 \Delta t$.  

Next, we apply the above bounds to bound terms of the form $\norm{\frac{\partial}{\partial_{\theta_k}} T_j}$. In particular, we have 
\begin{align}\label{eq:norm_bounds2}
\norm{ \frac{\partial}{\partial_{\theta_k} }T_1 }&= \norm{ \int_{t_k}^{t_{k+1}} A\left(\frac{\partial}{\partial_{\theta_k}}\xi(t)\right) dt } \leq \int_{t_k}^{t_{k+1}} \norm{A} \norm{\frac{\partial}{\partial_{\theta_k}}\xi(t)} \leq  \int_{t_k}^{t_{k+1}}\norm{A}K_5 \Delta t dt = O(\Delta t^2) \\ \nonumber
\norm{ \frac{\partial}{\partial_{\theta_k} }T_2 }&=\norm{\int_{t_k}^{t_{k+1}}B \left(\frac{\partial}{\partial_{\theta_k}}b_p(x(t))\right)   }\leq  \int_{t_k}^{t_{k+1}}\norm{B} \norm{\frac{d}{dx}B_p(x(t)) } \norm{\frac{\partial}{\partial_{\theta_k}}x(t)}dt \\\nonumber
&\leq \int_{t_k}^{t_{k+1}}\norm{B} C_2 K_4\Delta t dt =  O(\Delta t^2) \\ \nonumber
\norm{\frac{\partial}{\partial_{\theta_k} } T_3} & =\norm{ \int_{t_k}^{t_{k+1}}B\left(\frac{\partial}{\partial_{\theta_k}} A_p(x(t))\right)\hat{u}_{\theta_k} + B[A_p(x(t)) -A_p(x_k)]\frac{\partial}{\partial_{\theta_k}}\hat{u}_{\theta_k}dt}\\ \nonumber
& \leq   \int_{t_k}^{t_{k+1}} \norm{B} \left( \norm{\frac{d}{dx}A_p(x(t))}\cdot \norm{\frac{\partial}{\partial_{\theta_k}}x(t)}  \norm{\hat{u}_{\theta_k}}+ \cdot \norm{ [A_p(x(t)) -A_p(x_k)]}\norm{\frac{\partial}{\partial_{\theta_k}}\hat{u}_{\theta_k} }\right)dt\\ \nonumber
&\leq  \int_{t_k}^{t_{k+1}}  \norm{B} \left( C_2\cdot K_3 \Delta t + C_1 \Delta t  K_3\right) dt = O(\Delta t^2) \\  \nonumber
\norm{ \frac{\partial}{\partial_{\theta_k} } T_4} & = \norm{\int_{t_k}^{t_{k+1}}BA_p(x(t))w_k dt }\leq  \int_{t_k}^{t_{k+1}}\norm{B} \cdot \norm{\frac{d}{dx}A_p(x(t))} \cdot \norm{\frac{\partial}{\partial_{\theta_k}}x(t)} \cdot \norm{w_k} \\ \nonumber
&\leq \int_{t_k}^{t_{k+1}}\norm{B}\cdot C_2\cdot K_3 \Delta t \cdot \norm{w_k}  =  O(\Delta t^2 \norm{w_k})\\ \nonumber
\norm{\frac{\partial}{\partial_{\theta_k} } T_5} & = 0.
\end{align}
Returning to our expression for $\nabla_{\theta_k}R_k$ in \eqref{eq:expanded_reward2} using the bound on terms of the form $\norm{T_j}$ from \eqref{eq:norm_bounds} we have that
\begin{equation}
\norm{\frac{1}{2 \Delta t} W_kB^T  \cdot \sum_{j=1}^5T_j } \leq \norm{\frac{1}{2 \Delta t} W_kB^T } \cdot \sum_{j=1}^5\norm{T_j }  =   \norm{\frac{1}{2 \Delta t} W_kB^T } O(\Delta t^2 (1 +\norm{w})) = O(\Delta t(1 +\norm{w_k})) \\
\end{equation}
and if we additionally use the bounds of the form $\norm{\frac{\partial}{\partial_{\theta_k}} T_i}$ from \eqref{eq:norm_bounds2} we may bound
\begin{equation}
\norm{\frac{1}{\Delta t} w_k^TA_p(x_k)^T \cdot  \sum_{j=1}^5 \frac{\partial}{\partial_{\theta_k}}T_j} \leq \frac{1}{\Delta t } \norm{A_p(x_k)}\norm{w_k} O(\Delta t^2(1 + \norm{w_k})) = O(\Delta t(\norm{w_k} + \norm{w_k}^2))  
\end{equation}
\begin{equation}
 \norm{\frac{1}{ \Delta t^2}  \sum_{i =1}^5 \sum_{j=1}^5\left( \frac{\partial}{\partial \theta_k}T_i^T\right) T_j } \leq \frac{1}{ \Delta t^2}   \sum_{i =1}^5 \sum_{j=1}^5 \norm{\frac{\partial}{\partial_{\theta_k}} T_j} \cdot \norm{T_i}  = O(\Delta t( 1 +  \norm{w_k} + \norm{w_k}^2))
\end{equation}
Putting together the above bounds with \eqref{eq:expanded_reward2} we have that 
\begin{equation}
\nabla_{\theta_k} R(x_k, e_k, \hat{u}_{\theta_k}+w_k) = W_k^TW_k\phi_k + O(\Delta t(1 + \norm{w_k} + \norm{w_k}^2))
\end{equation}
and
\begin{align*}
\mathbb{E}\hat{J}_k &= \mathbb{E}_{w_k \sim \mathcal{W}_k} [\nabla_{\theta_k} R(x_k, e_k, \hat{u}_{\theta_k} + w_k)] = \mathbb{E}_{w_k \sim \mathcal{W}_k}\left[ W_k^T W_k \phi_k + W_k^TB^T \cdot A_p(x_k)w_k \right]+ O(\Delta t(1 + \norm{w_k} + \norm{w_k}^2)) \\
&= W_k^T W_k \phi_k O(\Delta t(1 + \sigma + \sigma^2))
\end{align*}
where we have used the fact that $E_{w_k \sim \mathcal{W}_k}\norm{w_k} = \sigma$ and $E_{w_k \sim \mathcal{W}_k}\norm{w_k}^2 = \sigma^2$ to show the desired result for the bias of the gradient estimate. 

Next, we bound the sub-Gaussian norm of the estimator. We omit some details in the interest of brevity, but the main idea is to first show that the gradient estimate is a Lipschitz continuous function of $w_k$ where $w_k$ is the realization of $\mathcal{W}_k$. We then use Theorem 5.2.15 from  \cite[Theorem 2.6.3]{vershynin2018high} to demonstrate that $\Delta \hat{J}_k$ is a sub-Gaussian random variable. When specialized to our setting, the cited Theorem says that if $X \sim \mathcal{N}(\bar{\mu}, \bar{\sigma}^2I)$ is a Gussian random variable with finite mean $\bar{\mu}$ and $\sigma^2$, then the random variable $T(X)$ where $T$ is a Lipschitz continuous map is sub-Gaussian with norm $\norm{T(X)}_{\psi_2} \leq \frac CL\sigma$ where $C>$ is an absolute constant and $L>0$ is a Lipschitz constant for $T$. 

Next, letting $R_k = $ The estimate for the gradient can be expanded as
\begin{equation}\label{eq:expanded_grad}
\hat{J}_k = R_k\nabla_{\theta_k}\log\big(\mathbb{P}\{\pi(u_k |\theta_k, \theta_k, x_k, e_k)\} = R_k \nabla_{\theta_k} \left( -\frac{1}{2} \sum_{i=1}^{q}  \frac{(u_k^i - \hat{u}_k^i)^2}{\sigma^2} + 2 \log(\sigma) + \log(2\pi)\right) =  R_k\sum_{i=1}^{q} \left( \frac{u_k^i - \hat{u}_k^i}{\sigma^2}\frac{\partial }{\partial_{\theta_k}} \hat{u}_{\theta_k}^i\right)
\end{equation}
where $u_k^i$ is the $i$-th entry of $u_k$, $\hat{u}_k^i)$ is the $i$-th entry of $\hat{u}_k^i)$. Here, we have used the fact that $u_k \sim \pi_k(\cdot | \theta_k, x_k, e_k) = \mathcal{N}\left( \hat{u}_\theta, \sigma^2 I \right)$ and used the formula the logarithm of normal distributions. Noting the $u_k^i - \hat{u}_{\theta_k}^i = w_k$, the above expression can be rewritten as
\begin{equation}
\hat{J}_k = R_k\sum_{i=1}^{q} \frac{w_k^i}{\sigma^2}\frac{\partial }{\partial_{\theta_k}} \hat{u}_{\theta_k}^i = R_k \frac{1}{\sigma^2} w_k^T \frac{\partial }{\partial_{\theta_k}} \hat{u}_{\theta_k}.
\end{equation}
where $w_k^i$ is the $i$-th entry of the random variable $\mathcal{W}_k$. Now, by our preceding discussion and Assumptions \ref{ass:bounded_reference}-\ref{ass:bounded_trajectories} we see that $\norm{R_k} \leq \Lambda$ for some $ \Lambda >0$. Thus, we observe that $w_k \to R_k \frac{1}{\sigma^2} w_k^T \frac{\partial }{\partial_{\theta_k}} \hat{u}_{\theta_k}$.
is a Lipschitz continuous mapping of $w_k$ with Lipschitz constant $L =\frac{1}{\sigma^2} \cdot \Lambda \cdot \norm{ \frac{\partial }{\partial_{\theta_k}} \hat{u}_{\theta_k}}= O(\frac{1}{\sigma^2})$, where we have again use the fact that $\frac{\partial }{\partial_{\theta_k}} \hat{u}_{\theta_k} $ is bounded, as was established above. Thus, by  \cite[Theorem 2.6.3]{vershynin2018high} we see that $\nabla \hat{J}$ is a sub-Gaussian random variable with norm on the order of $O(\frac{1}{\sigma})$, as desired.  

\subsection{Proof of Lemma \ref{lemma:noise_characterization}}
We first demonstrate how to calculate $\delta_k^e$, the disturbance for the tracking error over the interval $[t_k, t_{k+1})$.
For convenience we re-write \eqref{eq:error_decomp}:
\begin{equation}\label{eq:error_decomp3}
e_{k+1} = e_k + \Delta t(A + BK)e_k + \Delta t B W_k \phi_k  + \Delta t A_p(x_k)w_k + \sum_{i =1}^{5}T_i
\end{equation}
Now, we may re-write
\begin{equation}
 \Delta t(A + BK)e_k + \Delta t B W_k \phi_k  = \int_{t_k}^{t_{k+1}} (A + BK)e(t) + BW(t)\bar{\phi}(t)dt  + \int_{t_k}^{t_{k+1}} (A+BK)(e_k - e(t))dt  +  \int_{t_k}^{t_{k+1}} B[ W_k\phi_k - W(t)\bar{\phi}(t)]dt
\end{equation}
Now, by Lemma \ref{lemma:bounded_delta} for each $t \in [t_k, t_{k+1})$ there exists $C_1>0$ such that we have $\norm{e(t) - e_k} \leq C_1 \Delta t $ and $\norm{\bar{\phi}(t) - \phi_k} \leq C_1 \Delta t$. Thus, we may bound
\begin{equation}
\norm{\int_{t_k}^{t_{k+1}} (A+BK)(e_k - e(t))dt} \leq \int_{t_k}^{t_{k+1}} \norm{A+BK}\norm{(e_k - e(t))} dt \leq \norm{A+BK}C_1 \Delta t dt = O(\Delta t^2)
\end{equation}
Furthermore, we may expand
\begin{equation}
W_k\phi_k - W(t)\bar{\phi}(t) = W_k(\phi_k-\bar{\phi(t)}) - (W(t) -W_k)\bar{\phi}(t)
\end{equation}
Thus we may further bound
\begin{equation}
\norm{W_k\phi_k - W(t)\bar{\phi}(t) }= \norm{W_k(\phi_k-\bar{\phi(t)}) } +  \norm{(W(t) -W_k)\bar{\phi}(t)} \leq \norm{W_k}\cdot \norm{\phi_k-\bar{\phi}(t))} + \norm{W(t) -W_k} \cdot \norm{\bar{\phi}(t)} = O(\Delta t )
\end{equation}
where we have used the fact that $\norm{\phi_k-\bar{\phi}(t))} \leq C_1 \Delta t$, the fact that $W_k$ is bounded by Lemma \ref{lemma:tot_bound} and the fact that $W(t)$ is a bounded continuous function of time by Assumptions \ref{ass:bounded_reference} and \ref{ass:bounded_trajectories} so that $\norm{W_k - W(t)} = O(\Delta t)$ for each $t \in [t_k, t_{k+1})$. This then implies that $\int_{t_k}^{t_{k+1}} = O(\Delta t^2)$. Combining these decompositions with \eqref{eq:error_decomp3} we have
\begin{equation}
e_{k+1} = e_k + \Delta t(A + BK)e_k + \Delta t B W_k \phi_k  + A_p(x_k)w_k + \sum_{i =1}^{7}T_i 
\end{equation}
where we have set
\begin{align}
T_6 &= \int_{t_k}^{t_{k+1}} (A+BK)(e_k - e(t))dt\\
T_7 & = \int_{t_k}^{t_{k+1}} B[ W_k\phi_k - W(t)\bar{\phi}(t)]dt
\end{align}
thus using the above bounds we have that
\begin{equation}\label{eq:state_dist}
 \delta_k^e =  \Delta t A_p(x_k)w_k+   \sum_{i =1}^{7}T_i = \Delta t A_p(x_k)w_k+ O(\Delta t^2(1 + \norm{w_k}))
\end{equation} 
 Furthermore, we have 
 \begin{equation}
 \mathbb{E}[\Delta_k^e] = \mathbb{E}_{w_k \sim \mathcal{W}_k} \delta_k = \mathbb{E}_{w \sim \mathcal{W}_k}[A_p(x_k)w_k] +  \mathbb{E}_{w \sim \mathcal{W}_k} [O(\Delta t(1 + \norm{w_k}))] = O(\Delta^2 t(1+\sigma )),
\end{equation} 
where we have used the fact that $\mathbb{E}_{w_k \sim \mathbb{W}_k} \| w_k\|= (\sigma)$. 

Next, we demonstrate how to calculate $\delta_k^\phi$, the disturbance to the parameter error over the interval $[t_k, t_{k+1})$. Now we have that 
\begin{align}
\phi_k = \bar{\phi}(t_k) = \phi_k + \int_{t_k}^{t_{k+1}} W(t)^TW(t) \bar{\phi}(t) dt + \int_{t_k}^{t_{k+1}} \hat{J}_k - W(t)^TW(t)\bar{\phi}(t)dt
\end{align}
thus we have 
\begin{equation}
\delta_k^\phi = \int_{t_k}^{t_{k+1}} \hat{J}_k - W(t)\bar{\phi}(t)dt
\end{equation}
Now, 
\begin{align}\label{eq:tracking_distrubance}
\mathbb{E}_{w_k \sim \mathcal{W}_k} \left[ \int_{t_k}^{t_{k+1}} \hat{J}_k - W(t)\bar{\phi}(t)dt \right]  & = \int_{t_k}^{t_{k+1}} \mathbb{E}_{w_k \sim \mathcal{W}_k}[\hat{J}_k] - W(t)^TW(t)\bar{\phi}(t)dt \\ \nonumber
 & =  \int_{t_k}^{t_{k+1}}  W_k^TW_k \phi_k - W(t)^TW(t)\bar{\phi}(t) dt + O(\Delta t^2(1+  \norm{w_k}  + \norm{w_k}^2))
\end{align}
where in the last equality we have used Lemma \ref{lemma:grad_estimate}. Now, we have
\begin{equation}
W_k^TW_k\phi_k - W(t)^TW(t)\bar{\phi}(t) = W_k^TW_k(\phi_k-\bar{\phi(t)}) - (W(t)^T W(t) -W_k^TW_k)\bar{\phi}(t)
\end{equation}
Using the same argument we used to bound $W_k\phi_k - W(t)\bar{\phi}(t) = O(\Delta t)$, it is not difficult to show that $W_k^TW_k\phi_k - W(t)^TW(t)\bar{\phi}(t) =O(\Delta t)$. Combining this with \eqref{eq:tracking_distrubance} we see that $\mathbb{E}[\delta_k^\phi] = \mathbb{E}{O(\Delta t^2(1+  \norm{w_k}  + \norm{w_k}^2))} = O(\Delta t^2(1 + \sigma +\sigma^2))$ as desired, where we have used the fact that $\mathbb{E}[\norm{w_k}] = O(\sigma)$ and $\mathbb{E}[\norm{w_k}] = O(\sigma^2)$. 

Next, we bound the sub-Gaussian norms of $\Delta_k^e$ and $\Delta_k^\phi$. As was done in the proof of Lemma  \ref{lemma:noise_characterization}, we will omit some details in the interest of brevity since the following arguments closely follow arguments given above. We see that the map $w_k \to \delta_k^e$ is Lipschitz continuous with constant $L>0$ where $L = C\Delta t$ for some constant $C>0$ which is independent of $\sigma$. Thus, by \cite[Theorem 2.6.3]{vershynin2018high} there exists a constant $K_1>0$ such that $\norm{\Delta_k^e} \leq K_1 C \Delta t \sigma $. 

Next, we bound the sub-Gaussian norm for $\Delta_k^\phi$. First, we bound the term $\int_{t_k}^{t_k} \hat{J}_kdt = \Delta t \hat{J}_k = \Delta t \cdot R_k \frac{1}{\sigma^2} w_k^T \frac{\partial }{\partial_{\theta_k}} \hat{u}_{\theta_k}$, where we have used the same notation as the proof of Lemma \ref{lemma:grad_estimate}. Using the same arguments as was used for the proof of Lemma \ref{lemma:grad_estimate}, we see that the map $ w_k \to \Delta t \cdot R_k \frac{1}{\sigma^2} w_k^T \frac{\partial }{\partial_{\theta_k}} \hat{u}_{\theta_k}$ is Lipschitz continuous with a Lipschitz constant on the order of $O(\frac{\Delta t}{\sigma^2})$. Thus, using \cite[Theorem 2.6.3]{vershynin2018high} we see that $\norm{\Delta t \cdot R_k \frac{1}{\sigma^2} w_k^T \frac{\partial }{\partial_{\theta_k}} \hat{u}_{\theta_k}}_{\psi_2} = O(\frac{\Delta t}{\sigma})$. Next, we need to bound the sub-Gaussian norm of  $\int_{t_k}^{t_{k+1}}W(t)^TW(t)\bar{\phi}(t)dt$. Now, $W(t) = W(x(t),y_d^\gamma(t), e(t))$ depends on $x(t)$ and $e(t)$ which both depend on $w_k$. However, by Theorem 5.6.2 from \cite{polak2012optimization} our standing Assumptions ensure that the maps $w_k \to x(t)$ and $w_k \to e(t)$ are Lipschitz continuous for each $t \in [t_k, t_{k+1})$. By the continuity of $W$ and Assumption \ref{ass:bounded_trajectories} this allows us to conclude that for each $t \in [t_k, t_{k+1})$ the map $w_k \to W(t)$ is Lipschitz continuous, and thus the map $w_k \to W(t)^TW(t) \bar{\phi}(t)$ is also Lipschitz continuous, since $\bar{\phi}(t)$ is assumed to be bounded by Assumption \ref{ass:bounded_trajectories}. Letting $L$ denote a single common Lipschitz constant for the family of maps $\set{w_k \to W(t)^tW(t) \phi(t)}_{t \in [t_k ,t_{k+1})}$, we then see that the map $w_k \to \int_{t_k}^{t_{k+1}} W(t)^TW(t) \bar{\phi}(t) dt$ is Lipschitz continuous with constant $\Delta t *L$ by integrating the point-wise bound over the length of the interval. Thus, using \cite[Theorem 2.6.3]{vershynin2018high} we see that $\norm{\int_{t_k}^{t_{k+1}} W(t)^TW(t) \bar{\phi}(t) dt}_{\psi_2} = O(\Delta t \sigma)$. Using these bounds and the Triangle inequality we have that $\norm{\delta_k^\phi}_{\psi_2} \leq \norm{\Delta t  \hat{J}}_{\psi_2} + \norm{\int_{t_k}^{t_{k+1}} W(t)^TW(t) \bar{\phi}(t) dt}_{\psi_2} = O( \frac{\Delta t}{\sigma}) +O(\Delta t \sigma)$. For the statement of the Lemma, we drop the $O(\Delta t \sigma)$ since we have assumes that $\sigma^2$ is finite and are interested in small values of $\sigma$.

\subsection{Proof of Theorem \ref{thm:convergence}}
The proof will use the following well-known bound on geometric series:
\begin{equation}\label{eq:geom1}
    \sum_{i =0}^{k-1}\rho^{k-i} \leq 1 + \frac{\rho}{1- \rho}
\end{equation}
However, since $\rho \to 1$ as $\Delta t \to 0$ this bound becomes very large for sampling intervals. To make the dependence on $\Delta t$ more explicit we note that since $e^{-\zeta \Delta t} < 1-\zeta \Delta t$ we have that 
\begin{equation}\label{eq:geom2}
    \frac{\zeta}{1-\zeta} < \frac{1}{\zeta \Delta t}
\end{equation}
Combining the above bounds with \eqref{eq:exp_decay} we have 
\begin{equation}
|\mathbb{E}[X_k]| \leq M \rho^k|X_0| +  (1 +\frac{1}{\zeta \Delta t})K + |\mathbb{E}[\sum_{i=0}^{k-1}\rho^{k-i}\mathcal{M}_k ] |,
\end{equation}
where $K = \sup{ 0\leq j \leq K} |\epsilon_j|$. By Lemma \eqref{eq:total_bias} we have the $K = O(\Delta t^2(1+\sigma +\sigma^2))$ and also that $ |\mathbb{E}[\sum_{i=0}^{k-1}\rho^{k-i}\mathcal{M}_k ] | = 0$, which when combined with the above equation implies \eqref{eq:total_bias}. Next, we characterize the deviation from the mean caused by the $\mathcal{M}_i$ using the inequality from \cite[Theorem 2.6.3]{vershynin2018high} which gives us
\begin{equation}\label{eq:exp_bound}
\mathbb{P}\left\{ \left|\sum_{i=0}^{k-1} \Phi(t_k, t_{i+1})\mathcal{M}_i\right|\geq t^2\right\} \leq 2 \exp\left(\frac{-ct^2}{M_1 \cdot M_2}\right)
\end{equation}
where $c>0$ is a numerical constant, $M_1 = \sup_{0 \leq i \leq k-1}\| \mathcal{M}_i\|_{\psi_2}^2$ and $K_2 = \sum_{i = 0}^{K_2}|\Phi(t_k, t_{i+1})|^2$. Follow steps similar to those used from \eqref{eq:geom1} to \eqref{eq:geom2} one can show that $M_2 \leq 1 + \frac{1}{2\zeta \Delta t}$. Moreover, by Lemma \ref{lemma:noise_characterization} we have $M_1 = O(\Delta t^2(\sigma^2 + \frac{1}{\sigma^2}))= O(\Delta t^2 \sigma^2)$ since we have chosen $\sigma \leq1$. Thus, $M_1 \cdot M_2 = O(\frac{\Delta t}{\alpha \sigma^2})$. Combining this with \eqref{eq:exp_bound} for some constant $C>0$ we have 
\begin{equation}\label{eq:almost_there}
    \mathbb{P}\left\{ \left|\sum_{i=0}^{k-1} \Phi(t_k, t_{i+1})\mathcal{M}_i\right|\geq t^2\right\} \leq 2 \exp\left(\frac{-Ct^2 \zeta \sigma^2}{\Delta t}\right). 
\end{equation}
Next, we equate
\begin{equation}
    \lambda = 2 \exp\left( \frac{-Ct^2 \zeta \sigma^2}{\Delta t}\right)
\end{equation}
which then suggests setting
\begin{equation}\label{eq:final_bullshit}
    t = \sqrt{C}\sqrt{\frac{ \Delta t \ln(\frac{2}{\lambda})}{ \zeta \sigma^2}}.
\end{equation}
Plugging \eqref{eq:final_bullshit} into \eqref{eq:almost_there} and combining the result with \eqref{eq:exp_decay} provides the desired high-confidence bound. 
\subsection{Form of Error Equations}
First, we note that the dynamics of the outputs and their derivatives is given by
\begin{equation}
\dot{\xi} = A\xi + B[b_p(x)  + A_p(x)]u 
\end{equation}
where $b_p(x)$ is the drift term for the plant and $A_p(x)$ is the decoupling matrix for the plant. Thus, at each instant of time the time when the control $u = \hat{u}(\theta, x, \gamma_d^\gamma + Ke)$ is applied derivative of the error equation is given by
\begin{equation}
\dot{e} = \xi - \xi_d= A\xi + B\left[b_p(x)  + A_p(x)\left[ \left( \beta_m(x) +  \sum_{k=1}^{K_1}\theta_1^k \beta_k(x)  \right)+\left( \alpha_m(x) +  \sum_{k=1}^{K_1}\theta_2^k \alpha_k(x) \right) \left( y_d^\gamma + Ke \right) \right] \right]- By_d^\gamma 
\end{equation}
\newpage
\begin{equation}
[
\end{equation}

\newpage
\bibliographystyle{IEEEtran}
\bibliography{references.bib}
\end{document}